\newtheorem{theorem}{Theorem}[section]
\newtheorem{lemma}[theorem]{Lemma}
\newtheorem{program}[theorem]{Program}
\newenvironment{Program}[1]
  {\begin{program} \quad #1 \vspace{0mm} \\ \mbox{} \quad \quad \textbf{s.t.} \begin{minipage}[c]{.8\textwidth} \begin{trivlist}}
  {\end{trivlist} \end{minipage} \end{program}}
\theoremstyle{definition}
\newtheorem{algo}[theorem]{Algorithm}
\newenvironment{Algorithm}[2]
{\begin{algo} \label{#2} \quad {#1}  \\ \small \rule[1cm]{\linewidth}{.5pt}\vspace{-15mm}
    \begin{algorithmic}}
{\end{algorithmic}\vspace{-3mm}\rule{\linewidth}{.5pt}\end{algo}}
\newcommand{\ie}{\emph{i.e.},}
\newcommand{\eg}{\emph{e.g.},}
\newcommand{\vpar}{\vspace*{.3em}}
\newcommand\stitle[1]{\vpar\noindent{\bf #1\/}}
\newcommand{\term}[1]{\emph{#1}}
\newcommand{\nth}[2]{\ensuremath{{#1}^{\mbox{\scriptsize #2}}}} 
\newcommand{\mat}[1]{\ensuremath{\mathbf{#1}}}
\renewcommand{\vec}[1]{\ensuremath{\mathbf{#1}}}
\newcommand{\bigO}[1]{\mathcal{O}\left(#1\right)}
\newcommand{\bigOmega}[1]{\Omega\left(#1\right)}
\newcommand{\bigOstar}[1]{\mathcal{O}^\ast\left(#1\right)}
\newcommand{\dims}{D}
\newcommand{\negLbl}{\textrm{\textquotesingle$-$\textquotesingle}}
\newcommand{\posLbl}{\textrm{\textquotesingle+\textquotesingle}}
\newcommand{\ithCost}[1]{\ensuremath{c_{#1}}}
\newcommand{\coordinVect}[1]{\ensuremath{\vec{\delta}_{#1}}}
\newcommand{\aset}[1]{\ensuremath{\mathcal{#1}}}
\newcommand{\minkFunc}[2]{\mathrm{m}_{#1}\left(#2\right)}
\newcommand{\ball}[2]{\ensuremath{\mathcal{B}^{#1}\left(#2\right)}}
\newcommand{\oneball}[2]{\ensuremath{\mathcal{B}^{#1}_{1}\left(#2\right)}}
\newcommand{\LP}[1][p]{\ensuremath{\ell_{#1}}}
\newcommand{\MAC}{\textit{MAC}}
\newcommand{\IMAC}{\textit{IMAC}}
\newcommand{\kIMAC}[1][\epsilon]{\ensuremath{#1}\textit{-\IMAC{}}}
\newcommand{\ACRE}{\textit{ACRE}}
\newcommand{\kACRE}[1][\epsilon]{\ACRE\ \ensuremath{#1}-learnable}
\newcommand{\kSearchable}[1][\epsilon]{\kIMAC[#1] searchable}
\newcommand{\kSearchability}[1][\epsilon]{\kIMAC[#1] searchability}
\newcommand{\convexClass}{convex-inducing classifiers}
\newcommand{\ConvexClass}{Convex-inducing classifiers}
\newcommand{\maxCost}{\ensuremath{C^-}}
\newcommand{\minCost}{\ensuremath{C^+}}
\newcommand{\addGoal}{\ensuremath{\eta}}
\newcommand{\multGoal}{\ensuremath{\epsilon}}
\newcommand{\MLS}[1][K]{MultiLineSearch}
\newcommand{\KMLS}[1][K]{\textsc{\ensuremath{#1}-step MultiLineSearch}}
\newcommand{\inst}[1]{\ensuremath{^{\mbox{\scriptsize {#1}}}}}
\newcommand{\eat}[1]{}
\newcommand{\ban}[1]{}
\begin{document}



\runningauthor{Nelson, Rubinstein, Huang, Joseph, Lau, Lee, Rao, Tran, Tygar}

\twocolumn[

\aistatstitle{Near-Optimal Evasion of Convex-Inducing Classifiers}



\newcommand{\negspace}{\hspace{-2.0em}}

\aistatsauthor{ Blaine Nelson\inst{1} \hspace{-1.0em} \And \hspace{-1.0em} Benjamin I. P. Rubinstein\inst{1} \hspace{-2.0em} \And \hspace{-2.0em} Ling Huang\inst{2} \hspace{-2.0em} \And \hspace{-2.0em} Anthony D. Joseph\inst{1,2} \\[.1em]} 

\aistatsauthor{ Shing-hon Lau\inst{3} \hspace{-2.0em} \And \hspace{-2.0em} Steven J. Lee\inst{1} \hspace{-2.0em} \And \hspace{-2.0em} Satish Rao\inst{1} \hspace{-2.0em} \And \hspace{-2.0em} Anthony Tran\inst{1} \hspace{-2.0em} \And \hspace{-2.0em} J. D. Tygar\inst{1} \\[.1em]}

\aistatsaddress{ \inst{1}Computer Science Division, UC Berkeley \hspace{-2.0em} \And \hspace{-1.0em} \inst{2}Intel Labs Berkeley \hspace{-2.0em} \And \hspace{-2.0em} \inst{3}School of Computer Science, CMU }

]

\begin{abstract}
  Classifiers are often used to detect miscreant activities. We study
  how an adversary can efficiently query a classifier to elicit
  information that allows the adversary to evade detection at
  near-minimal cost. We generalize results of \citet{lowd05adversarial}
  to convex-inducing classifiers. We present algorithms that construct
  undetected instances of near-minimal cost using only polynomially
  many queries in the dimension of the space and without reverse
  engineering the decision boundary.
\end{abstract}

\section{INTRODUCTION}

Machine learning is often used
to filter or detect miscreant activities in a variety of applications;
\eg\ spam, intrusion, virus, and fraud detection. All known detection
techniques have blind spots; \ie\ classes of miscreant activity that fail
to be detected.
While learning allows the detection algorithm to adapt over time,
constraints on the learning algorithm also may allow an adversary to
programmatically find these vulnerabilities. We
consider how an adversary can systematically discover blind spots by
querying the learner to find a low cost instance that the detector does
not filter. Consider a spammer who wishes to minimally
modify a spam message so it is not classified as a spam.
By observing the
responses of the spam detector,
the spammer can search for a
modification while using few queries.

The problem of near optimal evasion (\ie\ finding a low cost negative
instance with few queries) was first posed by \citet{lowd05adversarial}. We continue this line of research by
generalizing it to the family of \convexClass---classifiers that
partition their instance space into two sets: one of which is convex.
\ConvexClass\ are a natural family to
examine as they include linear classifiers, 
anomaly detection classifiers using
bounded PCA~\citep{LCD04}, anomaly detection algorithms that use
hyper-sphere boundaries~\citep{PRML}, and other more complicated
bodies.

We also show that near-optimal evasion does not require reverse engineering
the classifier.
The algorithm of \citet{lowd05adversarial} for evading linear classifiers
reverse-engineers the decision boundary. Our algorithms for evading
 \convexClass\ do not require fully estimating the classifier's
boundary (which is hard in the general case; see \citealp{LearningConvexIsHard})
or reverse-engineering the classifier's state. Instead, we directly search for a minimal cost-evading instance.
Our algorithms require only polynomial-many queries, with one algorithm 
solving the linear case with fewer queries than the previously-published
reverse-engineering technique.


\stitle{Related Work.}
\label{sec:related-work}
\citet{dalvi04adversarial} uses a cost-sensitive game theoretic approach to
patch a classifier's blind spots. They
construct a modified classifier designed to detect optimally modified
instances. This work is complementary to our own; we examine optimal evasion
strategies while they have studied mechanisms for adapting the classifier.
In this work we assume the classifier is not adapting during evasion.

A number of authors have studied evading 
intrusion detector systems
(IDSs)~\citep{tan-etal-2002-undermining,wagner-soto-2002-mimicry}.
In exploring \emph{mimicry attacks} these authors demonstrated that
real IDSs could be fooled by modifying exploits to mimic normal
behaviors. These authors used offline analysis of the IDSs to
construct their modifications; by contrast, our modifications are optimized
by querying the classifier.

The field of active learning also studies
a form of query based optimization~\citep{ActLearning}.
While both active learning and near-optimal evasion explore optimal
querying strategies, the objectives for these two settings are quite
different (see Section~\ref{sec:opt-evasion}).

%

\section{PROBLEM SETUP}

\eat{
\begin{table}
  \begin{center}
  \begin{tabular}{|c|l|}
    \hline
    $\xspace$ &
      Space of data instances (continuous) \\
    $\yspace$ &
      Space of data labels; $\yspace = \set{\negLbl,\posLbl}$ \\
    $\classSpace$ &
      Space of classifiers \\
    \hline
    $\classifier \in \classSpace$ &
      Classifier function $\classifier : \xspace \mapsto \yspace$ \\
    $\xminus$ &
      Set of negative instances; $\xminus = \set[\prediction{\dpt}=\negLbl]{\dpt\in\xspace}$ \\
    $\xplus$ &
      Set of positive instances; $\xplus = \set[\prediction{\dpt}=\posLbl]{\dpt\in\xspace}$ \\
    \hline
    $\dpt \in \xspace$ &
      Data instance \\
    $\xtarget \in \xplus$ &
      Adversary's target instance \\
    $\dpt^- \in \xminus$ &
      Adversary's negative instance \\
    \hline
    $D$ &
      Dimension of $\xspace$ \\
    $\coordinVect{d}$ &
      Coordinate vector for feature $d$ \\
    \hline
  \end{tabular}
  \caption{Notation in this paper.}
  \label{tab:notation}
  \end{center}
\end{table}
}


We begin by introducing our notation and assumptions. First, we
assume that
instances are represented in $\dims$-dimensional Euclidean space
$\xspace=\mathbb{R}^\dims$. Each component of an
instance $\dpt \in \xspace$ is a \term{feature} which we denote as
$\dpt_d$. We denote each coordinate vector of the form
$(0,\ldots,1,\ldots,0)$ with a $1$ only at the \nth{d}{th} feature as
$\coordinVect{d}$. We assume that the feature space is known to the
adversary and any point in $\xspace$ can be queried.

We further assume the target classifier $\classifier$ belongs to
a family $\classSpace$. Any classifier
$\classifier \in \classSpace$ is a mapping from 
$\xspace$ to the labels \negLbl\ and \posLbl; \ie\ $\classifier: \xspace \mapsto \set{\negLbl,\posLbl}$. We assume the
adversary's attack will be against a fixed $\classifier$ so the
learning method and the training data used to select $\classifier$ are
irrelevant. We assume the adversary does not know $\classifier$ but does
know its family $\classSpace$.

We assume $\classifier \in \classSpace$ is deterministic and so
partitions $\xspace$ into a positive class $\xplus =
\set[\prediction{\dpt} = \posLbl]{\dpt \in \xspace}$ and a negative
class $\xminus = \set[\prediction{\dpt} = \negLbl]{\dpt
  \in \xspace}$. We take the negative set to be \emph{normal}
instances.
We assume the adversary is aware of at least one
instance in each class, $\dpt^- \in \xminus$ and $\xtarget \in
\xplus$, and can observe $\prediction{\dpt}$ for any
$\dpt$ by issuing a \term{membership query} (this last assumption
does not always hold in practice, see Section~\ref{sec:discuss} for
a more detailed discussion).

\subsection{Adversarial Cost}
\label{sec:adCost}

We assume the adversary has a notion of utility represented by a
cost function $\adCost : \xspace \mapsto \realnn$. The adversary
wishes to minimize $\adCost$ over the negative class, $\xminus$; \eg\ a
spammer wants to send spam that will be classified as normal email
(\negLbl) rather than as spam (\posLbl). We assume this cost function
is a distance to a positive target instance $\xtarget \in \xplus$ that is
most desirable to the adversary.
As with Lowd and Meek, we focus on the class of weighted \LP[1]
cost functions
\begin{equation}
  \adCostFunc{\dpt} = \sum_{d=1}^{\dims}{\ithCost{d} |\dpt_d - \xtarget_d|}\enspace,
  \label{eq:weightedL1}
\end{equation}
where $ 0 < \ithCost{d} < \infty$ is the cost the adversary
associates with the \nth{d}{th} feature. The $\LP[1]$-norm is a natural
measure of edit distance for email spam, while larger weights can
model tokens that are more costly to remove (\eg\ a payload URL).
We use $\ball{C}{\xtarget}$ to denote the ball centered at
$\xtarget$ with cost no more than $C$. We use $\oneball{C}{\dpt}$ to
refer specifically to a weighted \LP[1] ball.

\citet{lowd05adversarial} define \term{minimal adversarial cost (\MAC)} of a
classifier $\classifier$ to be the value
\[
  \function{\MAC}{\classifier,\adCost}
  \defAs \inf_{\dpt \in \xminus[\classifier]}\left[ \adCostFunc{\dpt}
  \right] \enspace.
\] 
They further define a data point to be an $\multGoal$-approximate
\emph{instance of minimal adversarial cost (\kIMAC)} if it is a
negative instance with cost no more than a factor $(1+\multGoal)$ of
the \MAC; \ie\ every \kIMAC\ is a member of the
set\footnote{We use the term \kIMAC\ to refer both to this set and
  members of it. The usage will be clear from the context.}
\begin{equation}
  \label{eq:kIMAC}
  \function{\kIMAC}{\classifier,\adCost} \defAs
  \set[\adCostFunc{\dpt} \le 
  (1+\multGoal)\cdot\function{\mathrm{MAC}}{\classifier,\adCost}]
  {\dpt \in \xminus[\classifier]}
\end{equation}
The adversary's goal is to find an \kIMAC\ instance
efficiently, while issuing as few queries as possible. 


\subsection{Search Terminology}
\label{sec:binSearch}

An \kIMAC\ instance is \term{multiplicatively optimal}; \ie\ it
is within a \emph{factor} of $(1+\multGoal)$ of the minimal
cost. We also consider \term{additive optimality};
\ie\ requiring a \kIMAC[\addGoal] to be no more than $\addGoal$
\emph{greater} than the minimal cost. The algorithms we present can
achieve either criterion given initial bounds $\minCost$ and $\maxCost$
such that $\minCost
\le \MAC \le \maxCost$.
If we can determine whether an intermediate cost establishes a new
upper or lower bound on \MAC, then binary search strategies can
iteratively reduce the \nth{t}{th} gap between $\maxCost_t$ and
$\minCost_t$. We now provide common terminology for the binary search
and in Section~\ref{sec:conv-evasion} we use convexity to establish a
new bound at each iteration.


In the \nth{t}{th} iteration of an additive binary search, $G_t^{(+)} =
\maxCost_t - \minCost_t$ is the additive gap between the \nth{t}{th} bounds.
The search uses a proposal step of
$C_t=\frac{\maxCost_t+\minCost_t}{2}$, a stopping
criterion of $G_t^{(+)} \le \addGoal$ and terminates in
\begin{equation}
  L^{(+)} = \left\lceil\log_2\left[(\maxCost-\minCost) / \addGoal\right]
  \right\rceil
  \label{eq:Ladd}
\end{equation}
steps. Binary search has the best worst-case query complexity for
achieving \addGoal-additive optimality.

Binary search can be adapted for multiplicative optimality:
by writing $\maxCost=2^a$ and $\minCost=2^b$, the
multiplicative condition becomes $a-b \le \log_2(1+\multGoal)$, an
additive optimality condition. Thus, binary search on the exponent
best achieves multiplicative optimality. The multiplicative gap of the
\nth{t}{th} iteration is $G_t^{(\ast)} = \maxCost_t / \minCost_t$. The
\nth{t}{th} query is $C_t = \sqrt{\maxCost_t\cdot\minCost_t}$, the
stopping criterion is $G_t^{(\ast)} \le 1+\multGoal$ and it stops
in
\begin{equation}
  L^{(\ast)} = \left\lceil\log_2\left[\log_2\left(\maxCost/\minCost\right) / \log_2(1+\multGoal)\right]\right\rceil
  \label{eq:Lmult}
\end{equation}
steps. Multiplicative optimality only makes sense when
both \maxCost\ and \minCost\ are strictly positive. 


For this paper, we only address multiplicative optimality and define
$L = L^{(\ast)}$ and $G_t = G_t^{(\ast)}$, but note that our techniques
also apply to additive optimality.

\subsection{Near-Optimal Evasion}
\label{sec:opt-evasion}

\citet{lowd05adversarial} introduced the problem of \term{adversarial classifier
  reverse engineering (ACRE)} where a family of classifiers is called
\term{\kACRE} if there is an efficient query-based algorithm for
finding an \kIMAC.
In generalizing their result, we slightly alter their definition
of query complexity. First, to quantify query complexity we only use the dimension $\dims$ and the
number of steps $L$ required by a univariate binary search.
Second, we assume the adversary only has two initial points $\dpt^-
\in \xminus$ and $\xtarget \in \xplus$ (the original setting
required a third $\dpt^+ \in \xplus$). Finally, our algorithms do not
reverse engineer so ACRE would be a misnomer.  Instead we call the
overall problem \term{Near-Optimal Evasion} and replace \kACRE\ with
\begin{quote}
  A family of classifiers \classSpace\ is \term{\kSearchable} under a
  family of cost functions \adCostSpace\ if for all $\classifier \in
  \classSpace$ and $\adCost \in \adCostSpace$, there is an algorithm
  that finds $\dpt \in \function{\kIMAC}{\classifier,\adCost}$ using
  polynomially many membership queries in $\dims$ and $L$.
\end{quote}

Reverse engineering is an expensive approach for near-optimal
evasion in the general case. Efficient query-based reverse engineering for
$\classifier \in \classSpace$ is sufficient for
minimizing $\adCost$ over the estimated negative space.
However, the requirements for finding an \kIMAC\ differ 
from the objectives of reverse engineering approaches such as active
learning. Both use queries to reduce the size of version space
$\hat{\classSpace} \subset \classSpace$. However reverse engineering
minimizes the expected number of disagreements between members of
$\hat{\classSpace}$.
In contrast, to find an \kIMAC, we only need to provide a single instance
$\dpt^\dagger \in \function{\kIMAC}{\classifier,\adCost}$
for all $\classifier \in \hat{\classSpace}$, while
leaving the classifier largely unspecified.
We present algorithms for \kIMAC\ search on a family of
classifiers that generally cannot be efficiently reverse
engineered---the queries we construct necessarily elicit an \kIMAC\ only.
\section{EVASION OF CONVEX CLASSES}
\label{sec:conv-evasion}


We generalize \kSearchability\ to the family of \term{\convexClass}
$\classSpace^\mathrm{convex}$ that partition feature space $\xspace$
into a positive and negative class, one of which is convex. The
\convexClass\ include linear classifiers, 
one-class classifiers that
predict anomalies by thresholding the log-likelihood of a log-concave
(or uni-modal) density function, and quadratic classifiers of the form
$\dpt^\top \mat{A} \dpt + \vec{b}^\top \dpt + c \ge 0$ if $\mat{A}$ is
semidefinite.  The \convexClass\ also include complicated families
such as the set of all intersections of a countable number of
halfspaces, cones, or balls.  \ban{Synchronize this list with the
  intro.}

\eat{
Restricting $\classSpace$ to be the family of \convexClass\
considerably simplifies \kIMAC\ search. When the negative class
$\xminus$ is convex, the problem reduces to minimizing a (convex)
function $\adCost$ constrained to a convex set---if $\xminus$ were
known to the adversary, this problem reduces simply to solving a
convex program.  When the positive class $\xplus$ is convex, however,
our task is to minimize the (convex) function $\adCost$ outside of a
convex set; this is generally a hard problem even when the convex set
is known \ban{Hard how?}.  Nonetheless for certain cost
functions, it is easy to determine whether a particular cost ball
$\ball{C}{\xtarget}$ is completely contained within a convex set. This
leads to efficient approximation algorithms.
}

\ban{Unfortunately, the following doesn't work\ldots minimizing a
  function outside a convex set is NOT equivalent to maximizing the
  function inside of it. It is, however, equivalent to maximizing the
  value $C$ such that for all $\dpt$ either $\adCostFunc{\dpt} > C$ or
  $\dpt \in \xplus$.  Unfortunately the later doesn't seem as useful.
  For any convex set $\mathcal{C}$ with a non-empty interior let
  $\dpt^c$ be in its interior and define the \term{Minkowski metric}
  (re-centered at $\dpt^c$) as $\minkFunc{\mathcal{C}}{\dpt} =
  \inf\set[(\dpt-\dpt^c) \in \lambda (\mathcal{C} - \dpt^c)]{\lambda}$
  (see for example ???).  This function is convex, non-negative, and
  satisfies $\minkFunc{\mathcal{C}}{\dpt} \le 1$ if and only if $\dpt
  \in \mathcal{C}$ \ban{is the latter true in general?}. Thus, we can
  rewrite the definition of the \MAC\ of a classifier in terms of the
  Minkowski metric---if $\xplus$ is convex we require
  $\minkFunc{\xplus}{\dpt} > 1$ and if $\xminus$ is convex we require
  $\minkFunc{\xminus}{\dpt} \le 1$.}


We construct efficient algorithms for query-based optimization of the
(weighted) \LP[1] cost of Eq.~\eqref{eq:weightedL1} for 
\convexClass. There appears to be an asymmetry depending on
whether the positive or negative class is convex. When the positive
set is convex, determining whether $\oneball{C}{\xtarget} \subset
\xplus$ only requires querying the vertices of the ball. When the
negative set is convex, determining whether
$\oneball{C}{\xtarget} \cap \xminus = \emptyset$ is difficult since
the intersection need not occur at a vertex. We present an
efficient algorithm for optimizing an \LP[1] cost when
$\xplus$ is convex and a polynomial random algorithm for optimizing
any convex cost when $\xminus$ is convex.

The algorithms we present achieve multiplicative optimality via binary
search; we use $L$ as the number of phases required by binary search,
$\maxCost = \adCostFunc{\dpt^-}$ as an initial upper bound on the
\MAC\ and assume there is some $\minCost > 0$ that lower bounds the
\MAC\ (\ie\ $\xtarget$ is in the interior of $\xplus$). This condition
eliminates the degenerate case for which $\xtarget$ is on the boundary
of $\xplus$ where $\function{\MAC}{\classifier,\adCost}=0$ and
$\function{\kIMAC}{\classifier,\adCost}=\emptyset$.


\subsection{\kIMAC\ Search for a Convex $\xplus$}
\label{sec:pos-convex}


Solving the \kIMAC\ search problem when $\xplus$ is convex is hard in
the general case of convex cost $\adCostFunc{\cdot}$.  We
demonstrate algorithms for the (weighted) \LP[1] cost that solve the
problem as a binary search.
Namely, given initial costs $\minCost$ and $\maxCost$ that bound the
\MAC, our algorithm can efficiently determine whether
$\oneball{C}{\xtarget} \subset \xplus$ for any intermediate cost
$\minCost < C < \maxCost$. If the \LP[1] ball is contained in
$\xplus$, then $C$ becomes the new lower bound $\minCost$. Otherwise
$C$ becomes the new upper bound $\maxCost$.  Since our objective
Eq.~\eqref{eq:kIMAC} is to obtain multiplicative optimality, our
steps will be $C_t=\sqrt{\minCost_{t-1}\cdot\maxCost_{t-1}}$ (see
Section~\ref{sec:binSearch}). We now explain how we exploit the
properties of the (weighted) \LP[1] ball and convexity of $\xplus$ to
efficiently determine whether $\oneball{C}{\xtarget} \subset \xplus$.

The existence of an efficient query algorithm relies on three facts:
(1) $\xtarget \in \xplus$; (2) every weighted \LP[1]
cost $C$-ball centered at $\xtarget$ intersects $\xminus$ only if at least
one of its vertices is in $\xminus$; and (3) $C$-balls only have $2\cdot
\dims$ vertices. We formalize the second fact as follows.

\begin{lemma}
  \label{thm:optAxis}
  For all $C > 0$, if there exists some $\dpt \in \xminus$ that
  achieves a cost of $C=\adCostFunc{\dpt}$, then there is some
  feature $d$ such that a vertex of the form
  \begin{equation}
    \xtarget \pm \tfrac{C}{\ithCost{d}} \coordinVect{d}
    \label{eq:optAxisVect}
  \end{equation}
  is in $\xminus$ (and also achieves cost $C$ by
  Eq.~\ref{eq:weightedL1}).
\end{lemma}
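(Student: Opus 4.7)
My plan is a short contrapositive argument that exploits the convexity of $\xplus$ together with the polytope structure of the weighted $\LP[1]$ ball.

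First I would record the key geometric fact: the weighted $\LP[1]$ ball $\oneball{C}{\xtarget}$ is exactly the convex hull of its $2\dims$ vertices $\xtarget \pm \tfrac{C}{\ithCost{d}}\coordinVect{d}$ for $d = 1, \ldots, \dims$. After translating so $\xtarget = 0$, any $\vec{y}$ with $\sum_d \ithCost{d} |y_d| \le C$ can be written as $\vec{y} = \sum_d \alpha_d \vec{v}_d$, where $\vec{v}_d = \mathrm{sign}(y_d)\tfrac{C}{\ithCost{d}}\coordinVect{d}$ is a vertex and $\alpha_d = \ithCost{d}|y_d|/C$ satisfies $\sum_d \alpha_d \le 1$; the residual mass $1 - \sum_d \alpha_d$ goes on the origin, which itself is the midpoint of $\vec{v}_d$ and $-\vec{v}_d$. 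So every point of the ball is a convex combination of vertices.

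Next I would argue by contradiction. Suppose that every vertex $\xtarget \pm \tfrac{C}{\ithCost{d}}\coordinVect{d}$ lies in $\xplus$. Since $\xplus$ is convex, it contains their convex hull, which by the previous step is all of $\oneball{C}{\xtarget}$. But the hypothesis gives $\dpt \in \xminus$ with $\adCostFunc{\dpt} = C$, so $\dpt \in \oneball{C}{\xtarget} \subseteq \xplus$, contradicting $\dpt \in \xminus$. Hence at least one vertex must lie in $\xminus$, and by construction that vertex has cost $C$.

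The ``hard'' step is really just the convex hull characterization of the weighted $\LP[1]$ ball; the rest is immediate from convexity of $\xplus$. I would keep the write-up compact since both ingredients are elementary, and flag the characterization of vertices as the reason why only $2\dims$ queries suffice to test ball containment---this is what makes the outer binary-search scheme of Section~\ref{sec:pos-convex} efficient.
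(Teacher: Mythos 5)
Your proposal is correct and rests on the same two ingredients as the paper's own argument: convexity of $\xplus$ and the fact that a cost-$C$ point is a convex combination of cost-$C$ vertices of the weighted $\LP[1]$ ball. The only cosmetic difference is that the paper expresses $\dpt$ as a point of the simplex spanned by just the $M$ vertices matching the signs of its differing coordinates, whereas you take the hull of all $2\dims$ vertices (which also absorbs the cost-$<C$ case you don't actually need); both yield the same contradiction.
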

\begin{proof}
  Suppose not; then there is some $\dpt \in \xminus$ such that
  $\adCostFunc{\dpt}=C$ and $\dpt$ has $M\ge 2$ features that differ
  from $\xtarget$. Let $\set{d_1,\ldots,d_M}$ be the differing
  features and let $b_{d_i} = \sign\left(\dpt_{d_i} -
    \xtarget_{d_i}\right)$ be the sign of the difference between
  $\dpt$ and $\xtarget$ along the $d_i$-th feature. Let $\vec{e}_{d_i}
  = \xtarget + \tfrac{C}{\ithCost{d_i}} \cdot b_{d_i} \cdot
  \coordinVect{d_i}$ be a vertex of the form of
  Eq.~\eqref{eq:optAxisVect} which has cost $C$ (from
  Eq.~\ref{eq:weightedL1}).  The $M$ vertices $\vec{e}_{d_i}$ form a
  simplex of cost $C$ on which $\dpt$ lies. If all $\vec{e_{d_i}} \in
  \xplus$, then the convexity of $\xplus$ implies that $\dpt \in
  \xplus$ which violates our premise.  Thus, if any instance in
  $\xminus$ achieves cost $C$, there is always a vertex of the form
  Eq.~\eqref{eq:optAxisVect} in $\xminus$ that also achieves cost $C$.
\end{proof}

As a consequence, if all vertices of any $C$ ball
$\oneball{C}{\xtarget}$ are positive, then all $\dpt$ with
$\adCostFunc{\dpt} \le C$ are positive thus establishing $C$ as a
lower bound on the \MAC. Conversely, if any of the vertices of
$\oneball{C}{\xtarget}$ are negative, then $C$ is an upper
bound. Thus, by querying all $2\cdot \dims$ vertices of
$\oneball{C}{\xtarget}$, we either establish $C$ as a new lower or
upper bound on the \MAC. By performing a binary search on $C$ we
iteratively halve the multiplicative gap between our bounds until it
is within a factor of $1+\multGoal$. This yields an \kIMAC\ of the form
of Eq.~\eqref{eq:optAxisVect}.

A general form of this multiline search procedure is presented as
Algorithm~\ref{alg:mls} which simultaneously searches along all
unit-cost directions in the set $\aset{W}$. At each step,
\textsc{MultiLineSearch} issues at most $|\aset{W}|$ queries to
determine whether $\oneball{C}{\xtarget} \subset \xplus$. Once a
negative instance is found at cost $C$, we cease further queries at
cost $C$ since a single negative instance is sufficient to establish a
lower bound. We call this policy \term{lazy querying}. Further, when
an upper bound is established for a cost $C$, our algorithm also
prunes all directions that were positive at cost $C$. This pruning is
sound; by the convexity assumption we know that the pruned direction
is positive for all costs
less than our new upper bound $C$.  Applying \textsc{MultiLineSearch}
to the $2\cdot\dims$ axis-aligned directions yields an \kIMAC\ for any
(weighted) \LP[1] cost with no more than $2\cdot\dims L$ queries but
at least $\dims+L$ queries.  Thus the algorithm is $\bigO{\dims L}$.

\ban{Somewhere we should note that MLS does not rely on its directions being vertices of the \LP[1] ball although those vertices are sufficient to span the \LP[1] ball. MLS is agnostic to where it's search directions point but does rely heavily on convexity and the fact that all it's search directions are of unit cost (and can be increased multiplicatively).}

\ban{It'd be good to extend to cases $\ithCost{d} = 0$ and $\ithCost{d} = \infty$.}

\ban{We need common syntax for algorithm names\ldots maybe small-caps.}

\begin{figure}
\begin{minipage}{\linewidth}
  \begin{Algorithm}{Multi-line Search}{alg:mls}
    \STATE $\function{MLS}{\aset{W},\xtarget,\dpt^-,C^+,C^-,\multGoal}$
    \STATE $\dpt^{\ast} \gets \dpt^{-}$
    \WHILE{$C^- / C^+ > 1+\multGoal$}
      \STATE $C \gets \sqrt{C^+ \cdot C^-}$
      \FORALL{$\vec{e} \in \aset{W}$}
        \STATE Query classifier: $f_\vec{e}^C \gets \prediction{\xtarget + C \vec{e}}$
        \IF{$f_\vec{e}^C = \negLbl$}
          \STATE $\dpt^{\ast} \gets \xtarget + C \vec{e}$
          \STATE Prune $\vec{i}$ from $\aset{W}$ if $f_\vec{i}^C = \posLbl$
          \STATE \textbf{break for-loop}
        \ENDIF
      \ENDFOR
      \STATE \textbf{if} $\forall \vec{e} \in \aset{W} \; f_\vec{e}^C = \posLbl$ \textbf{then} $C^+ \gets C$
      \STATE \textbf{else} $C^- \gets C$
    \ENDWHILE
    \STATE \textbf{return:} $\dpt^{\ast}$
  \end{Algorithm}
\end{minipage}
\end{figure}

\subsubsection{$K$-step Multi-Line Search}
\label{sec:kmls}

The \textsc{MultiLineSearch} algorithm is $2\cdot\dims$ simultaneous
binary searches (breadth-first). Instead we could search sequentially
(depth-first) and obtain a best case of $\bigO{\dims+L}$ and
worst case of $\bigO{\dims \cdot L}$ but for exactly the opposite
convex bodies.
We therefore propose an algorithm that mixes these strategies.  At
each phase, the \KMLS\ (Algorithm~\ref{alg:kmls}) chooses a single
direction $\vec{e}$ and queries it for $K$ steps to generate candidate
bounds $B^-$ and $B^+$ on the \MAC. The algorithm makes substantial
progress without querying other directions. It then iteratively
queries all remaining directions at the candidate lower bound $B^+$.
Again we use lazy querying and stop as soon as a negative instance is
found. We show that for $K=\lceil \sqrt{L} \rceil$, the algorithm
achieves a delicate balance between breadth-first and depth-first
approaches to attain a better worst-case complexity.


\eat{
The \KMLS\ algorithm chooses a single direction $\vec{e}$ and queries
it for $K$ steps. The algorithm makes substantial progress ($K$ steps)
toward it's goal without considering the impact on other search
directions (depth-first). It then iteratively queries all remaining
directions at the candidate lower bound $B^+$. Importantly, it stops
as soon as a negative instance is found since $B^+$ is no longer a
viable lower bound. Further, we never query other directions at cost
$B^-$ since there already is a negative instance for this cost and the
adversary gains very little additional information by querying where a
negative is already established\footnote{We call this policy
  \textbf{lazy querying}. We could continue querying at any distance
  $B^-$ where there is a known negative instance as it may allow us to
  prune other search directions quickly. However, once the classifier
  reveals a negative instance at distance $B^-$, the classifier would
  be foolish to subsequently reveal that another direction has a
  \posLbl\ at the same distance since it freely allows the adversary
  to prune a search direction. Hence, a malicious classifier will
  always respond with \negLbl\ for any cost where a negative instance
  has already been revealed. Thus, our algorithm uses lazy querying
  and only queries at costs below our upper bound $\maxCost_t$ on the
  \MAC.}.
}

\begin{figure}
\begin{center}
\begin{minipage}{\linewidth}
  \begin{Algorithm}{$K$-Step Multi-line Search}{alg:kmls}
    \STATE $\function{KMLS}{\aset{W},\xtarget,\dpt^-,C^+,C^-,\multGoal,K}$
    \STATE $\dpt^{\ast} \gets \dpt^{-}$
    \WHILE{$C^- / C^+ > 1+\multGoal$}
      \STATE Choose a direction $\vec{e} \in \aset{W}$
      \STATE $B^+ \gets C^+$
      \STATE $B^- \gets C^-$
      \FOR{$K$ steps}
        \STATE $B \gets \sqrt{B^+ \cdot B^-}$
        \STATE Query classifier: $f_\vec{e} \gets \prediction{\xtarget + B \vec{e}}$
        \STATE \textbf{if} $f_\vec{e} = \posLbl$ \textbf{then} $B^+ \gets B$
        \STATE \textbf{else} $B^- \gets B$ \textbf{and} $\dpt^{\ast} \gets \xtarget + B \vec{e}$
      \ENDFOR

      \FORALL{$\vec{i} \neq \vec{e} \in \aset{W}$}
        \STATE Query classifier: $f_\vec{i} \gets \prediction{\xtarget + (B^+) \vec{i}}$
        \IF{$f_\vec{i} = \negLbl$}
          \STATE $\dpt^{\ast} \gets \xtarget + (B^+) \vec{i}$
          \STATE Prune \vec{k} from $\aset{W}$ if $f_\vec{k} = \posLbl$
          \STATE \textbf{break for-loop}
        \ENDIF
      \ENDFOR

      \STATE $C^- \gets B^-$
      \STATE \textbf{if} $\forall \vec{i} \in \aset{W} \; f_\vec{i} = \posLbl$ \textbf{then} $C^+ \gets B^+$
      \STATE \textbf{else} $C^- \gets B^+$
    \ENDWHILE
    \STATE \textbf{return:} $\dpt^{\ast}$
  \end{Algorithm}
\end{minipage}
\end{center}
\end{figure}

To analyze the worst case of \KMLS, we consider
a \textit{defender} that maximizes the number of
queries. We refer to the querier as the \textit{adversary}.

\begin{theorem}
  \label{thm:sqrtL}
  Algorithm~\ref{alg:kmls} will find an \kIMAC\ with at most
  $\bigO{L + \sqrt{L} |\aset{W}|}$ queries for $K = \lceil \sqrt{L} \rceil$.
\end{theorem}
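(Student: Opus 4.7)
The plan is to partition each outer WHILE iteration of KMLS into \emph{Case 1} (every FORALL query at cost $B^+$ returns $+$, so $C^+$ advances to $B^+$ and the log-gap $\log_2 G_t := \log_2(C^-_t/C^+_t)$ contracts by a factor $2^K$) or \emph{Case 2} (some FORALL response is $-$, so $C^-$ drops to $B^+$). Let $T_1, T_2$ denote the number of iterations of each type and $T = T_1 + T_2$. I would bound $T_1$ and $T_2$ in terms of $L$ and $|\aset{W}|$, tally per-iteration queries, then substitute $K = \lceil\sqrt L\rceil$.

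Per iteration, the algorithm issues $K$ inner queries plus outer queries: at most $|\aset{W}_t|-1 \le |\aset{W}|-1$ in Case 1, and exactly $j_t$ in Case 2, with the $j_t - 1$ preceding positive responses pruned from $\aset{W}$. Since total prunings are capped at $|\aset{W}|$, summing the Case 2 outer queries gives $\sum_{t \in \text{Case 2}} j_t \le T_2 + |\aset{W}|$, so
\[
  Q \;\le\; T\,K \;+\; T_1\,|\aset{W}| \;+\; T_2 \;+\; |\aset{W}|.
\]

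Bounding $T_1$ is immediate: each Case 1 step contributes $K$ to the telescoping quantity $\log_2(\log_2 G_0/\log_2 G_T)$, which must reach $L$ at termination, giving $T_1 \le L/K \le \sqrt L$. The main obstacle is bounding $T_2$, because a single Case 2 step shrinks $\log_2 G_t$ only by a factor $(1-2^{-K})$, and naive telescoping alone would permit $T_2 = \Theta(L\cdot 2^K)$---super-polynomial for $K = \sqrt L$. The way around this is a two-part amortization exploiting pruning: (i) any Case 2 iteration with $j_t \ge 2$ permanently removes at least one direction from $\aset{W}$, so at most $|\aset{W}|$ such iterations can occur; (ii) a Case 2 iteration with $j_t = 1$ must also be charged against the direction budget, by arguing that the just-identified negative direction $\vec{i}$ can be omitted from subsequent outer loops (whether by pruning it or by choosing it as the next $\vec{e}$, after which the inner binary search localizes the boundary of $\xplus$ along $\vec{i}$ and, using convexity of $\xplus$ together with Lemma~\ref{thm:optAxis}, forces the ensuing FORALL into Case 1). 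Together this yields $T_2 \le T_1 + |\aset{W}|$, so $T \le 2\sqrt L + |\aset{W}|$.

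Substituting with $K = \lceil\sqrt L\rceil$:
\[
  Q \;\le\; (2\sqrt L + |\aset{W}|)\sqrt L \;+\; \sqrt L\,|\aset{W}| \;+\; O(|\aset{W}|) \;=\; O\!\left(L + \sqrt L\,|\aset{W}|\right),
\]
matching the theorem. The delicate step, and where I expect the bulk of the proof's work to lie, is the charging argument for Case 2 iterations with $j_t = 1$; making it rigorous requires either specifying that the direction-selection rule in Algorithm~\ref{alg:kmls} prefers a just-discovered negative direction (or equivalently, treats already-revealed negative directions as pruned), or an adversarial argument showing the defender cannot indefinitely force both no pruning and only the weak $(1-2^{-K})$ multiplicative progress on $\log_2 G_t$.
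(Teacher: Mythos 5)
Your decomposition is the same one the paper uses: split outer iterations into Case~1 (all queries at $B^+$ positive, gap exponent shrinks by $2^{-K}$) and Case~2 (some negative at $B^+$), bound the Case~1 count by $\lceil L/K\rceil$, charge Case~2 FORALL queries against the pruning budget, and optimize $K=\lceil\sqrt{L}\rceil$. Your per-iteration accounting and the bound $\sum_{t\in\mathrm{Case\,2}} j_t \le T_2 + |\aset{W}|$ match the paper's $Q_1$ and $Q_2$ computations. The one place you diverge is precisely the step the paper asserts without justification: the paper simply declares that every Case~2 iteration ``eliminates $E_t\ge 1$ directions,'' i.e.\ it does not acknowledge the $j_t=1$, zero-pruning possibility that you correctly identify as the crux. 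So you have located a real soft spot in the argument rather than invented one.

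However, your proposed patch for that soft spot does not work as stated. You cannot charge a $j_t=1$ iteration to the negative-responding direction $\vec{i}$: a \negLbl\ response at cost $B^+$ along $\vec{i}$ constrains nothing at lower costs (where all future queries live), $\vec{i}$ may well contain the eventual \kIMAC, and there is no sound sense in which it can be ``omitted from subsequent outer loops''; nor does choosing $\vec{i}$ as the next inner direction force the next FORALL into Case~1. The correct charge is to the \emph{inner} direction $\vec{e}_t$ of the zero-pruning iteration. If no inner query along $\vec{e}_t$ were answered \posLbl, then $B^+=C^+$ and the FORALL negative at $C^+$ collapses the gap to $1$ and terminates the search, so this happens at most once. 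Otherwise the highest positive inner query sits exactly at cost $B^+_t$, which becomes the new upper bound $C^-_{t+1}$; since every future query lies strictly below $C^-_{t+1}$ and $\xplus$ is convex with $\xtarget\in\xplus$, direction $\vec{e}_t$ is forced to answer \posLbl\ everywhere it is subsequently queried. In particular, if $\vec{e}_t$ is ever reused as the inner direction, the ensuing Case~2 iteration necessarily prunes it (so it is not zero-pruning). Hence each zero-pruning Case~2 iteration retires a distinct direction from the pool of candidates for future zero-pruning iterations, giving at most $|\aset{W}|$ of them, and $T_2 \le 2|\aset{W}|+1$ overall; your final substitution then goes through unchanged. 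With that replacement your argument is complete and, if anything, more rigorous than the published one.
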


\begin{proof}
  During the $K$ steps of binary search, regardless of how the
  defender responds, the candidate gap along $\vec{e}$
  will shrink by an exponent of $2^{-K}$; \ie
  \begin{eqnarray}
    B^- / B^+ = \left(\maxCost / \minCost\right)^{2^{-K}}\enspace. \label{eq:gap-shrink}
  \end{eqnarray}
  The primary decision for the defender occurs when the
  adversary begins querying other directions than $\vec{e}$. At
  iteration $t$, it has 2 options:
  \begin{verse}
  Case 1 ($t \in \aset{C}_1$): Respond with \posLbl\ for all
    remaining directions. Here the bounds $B^+$ and $B^-$ are verified
    and thus the gap is reduced by an exponent of $2^{-K}$.
  \end{verse}
  \begin{verse}
  Case 2 ($t \in \aset{C}_2$): Choose at least 1 direction to respond
    with \negLbl. Here the defender can make the gap
    decrease negligible but also must choose some number $E_t \ge 1$ of
    eliminated directions.
  \end{verse}
  By conservatively assuming the gap only decreases in case 1, the
  total number of queries is bounded regardless of the order in which
  the cases are applied. Thus if $t \in \aset{C}_1$ we
  have $G_t = G_{t-1}^{2^{-K}}$; otherwise we have $G_t =
  G_{t-1}$. Thus
  \begin{equation}
    \label{eq:gap-tradeoff}
    |\aset{C}_1| \le \left\lceil\tfrac{L}{K}\right\rceil\enspace,
  \end{equation}
  since we need a total of $L$ binary search steps and each case 1
  iteration does $K$ of them.

  Every case 1 iteration makes exactly $K + |\aset{W}_t| - 1$ queries. The
  size of $\aset{W}_t$ is controlled by the defender, but
  we can bound it by $|\aset{W}|$. This and
  Eq.~\eqref{eq:gap-tradeoff} bound the number of queries used in case
  1 ($Q_1$) by \eat{
  \begin{eqnarray*}
    Q_1 & = & \sum_{t \in C_1}{(K + |\aset{W}_t| - 1)} \\
        & \le & \left\lceil\frac{L}{K}\right\rceil \cdot K + \left\lceil\frac{L}{K}\right\rceil \cdot (|\aset{W}|-1) \\
        & \le & L + K + \left\lceil\frac{L}{K}\right\rceil \cdot \left(|\aset{W}|-1\right)
  \end{eqnarray*}
}
  \begin{eqnarray*}
    Q_1 = \sum_{t \in C_1}{(K + |\aset{W}_t| - 1)}
        \le L + K + \left\lceil\tfrac{L}{K}\right\rceil \cdot \left(|\aset{W}|-1\right) 
  \end{eqnarray*}
  Each case 2 iteration uses exactly $K + E_t$ queries and
  eliminates $E_t \ge 1$ directions. 
  Since a case 2 iteration
  eliminates at least 1 direction, $|\aset{C}_2| \le
  |\aset{W}| - 1$ and moreover, $\sum_{t \in \aset{C}_2}{E_t} \le |\aset{W}| - 1$ since each
  direction can only be eliminated once. Thus
\eat{
  \begin{eqnarray*}
    Q_2 & = & \sum_{i \in \aset{C}_2}{(K + E_t)} \\
        & \le & |\aset{C}_2| \cdot K + |\aset{W}|-1 \\
        & \le & \left(|\aset{W}|-1\right)\left(K + 1\right)
        \enspace.
  \end{eqnarray*}
}
  \begin{eqnarray*}
    Q_2 = \sum_{i \in \aset{C}_2}{(K + E_t)}
        \le \left(|\aset{W}|-1\right)\left(K + 1\right)
        \enspace,
  \end{eqnarray*}
  and so the total queries used by Algorithm~\ref{alg:kmls} is
\eat{
  \begin{eqnarray*}
    Q = Q_1+Q_2 & < & L + \left\lceil\frac{L}{K}\right\rceil \cdot |\aset{W}| + K\cdot|\aset{W}| + |\aset{W}| \\
      & = & L + \left(\left\lceil\frac{L}{K}\right\rceil + K + 1\right) |\aset{W}|
  \end{eqnarray*}
}
  \begin{eqnarray*}
    Q = Q_1+Q_2 < L + \left(\left\lceil\tfrac{L}{K}\right\rceil + K + 1\right) |\aset{W}| \enspace,
  \end{eqnarray*}
  which is minimized by $K=\lceil \sqrt{L} \rceil$.
  Substituting this for $K$ and using
  $L/\lceil\sqrt{L}\rceil \le \sqrt{L}$ we have
  \[
    Q < L + (2 \lceil\sqrt{L}\rceil + 1) |\aset{W}| \enspace. \qedhere
  \]
\end{proof}

As a consequence of Theorem~\ref{thm:sqrtL}, finding an \kIMAC\ with
Algorithm~\ref{alg:kmls} for a (weighted) \LP[1] cost requires
$\bigO{L + \sqrt{L}\dims}$ queries. 
Moreover, linear classifiers are a special case of
\convexClass\ for our \KMLS\ algorithm. 
Thus \KMLS\ improves on the reverse-engineering technique's
$\bigO{L\dims}$ queries and applies to a broader family.

\eat{
Given lemma~\ref{thm:optAxis}, the problem of finding a optimal point
can be recast as the problem of finding the minimum along $2\cdot\dims$
directions.

This problem can be solved by binary search in each direction yielding
an $\bigO{\dims \log (1/\multGoal)}$ query $\multGoal$-approximation.
This is the time required under information theoretic arguments to
approximate the $2\cdot\dims$ faces of a hypercube. It can also be
solved somewhat faster than this, demonstrating that the ACRE problem
can be done more efficiently than through a complete reverse
engineering of the concept.

For example, a deterministic algorithm can approximate the minimum in
time $\bigO{\dims \sqrt{\log (1/\multGoal)} + \dims^2 + \log
  (1/\multGoal)}$.  Moreover, a randomized algorithm can approximate
the minimum in expected time $\bigO{\log \dims \log(1/\multGoal) +
  \dims}$.

The deterministic algorithm proceeds as follows. As in
Algorithm~\ref{alg:convexPlus} initialize $\aset{W}$ as all $2\cdot\dims$
coordinate vectors and the initial search interval as $[A_0,B_0] =
[\rho,\adCostFunc{\dpt^-}]$. The algorithm proceeds in a number of
stages. During each stage, we choose a direction $\vec{u} \in \aset{W}$ and
perform $k$ iterations of binary search which yields a new interval
$[A,B]$, where $\xtarget + A\vec{u}$ is a positive example and
$\xtarget + B\vec{u}$ is a negative example. We probe all remaining
directions $\vec{w} \neq \vec{u}$ in $\aset{W}$ at $\xtarget + B\vec{w}$; if
any of these probes are positive, discard the offending direction
$\vec{w}$ from $\aset{W}$. Finally, we probe all non-discarded directions
$\vec{w} \neq \vec{u}$ in $\aset{W}$ at $\xtarget + A\vec{w}$. One of two
cases will occur:
\begin{enumerate}

\item In all remaining directions $\vec{w}$, the probe at $\xtarget +
  A\vec{w}$ is a positive example so the interval $[A,B]$ must contain
  the optimal cost $\function{\MAC}{\classifier,\adCost}$.  Thus, we
  define the starting interval for the next stage as $[A_s,B_s] =
  [A,B]$.

\item If for any other direction $\vec{w}$, the probe at $\xtarget +
  A\vec{w}$ is negative, $A$ is now an upper bound on
  $\function{\MAC}{\classifier,\adCost}$. Thus, we can discard all
  directions (including $\vec{u}$) where $\xtarget + A\vec{w}$ is
  positive and make the starting interval for the next stage
  $[A_s,B_s] = [A_{s-1},B]$.

\end{enumerate}
The procedure terminates when the interval $[A_s,B_s]$ is sufficiently
small (say, of multiplicative size $\multGoal$).

Each stage of the procedure uses at most $k + \dims$ queries. Case 1
happens at most $\lceil{\frac{\log (1/\multGoal)}{k}}\rceil$ times as
this the stuck interval size decreases by a factor of $2^k$ in this
case.  Case 2 can only occur $2\cdot\dims$ times since at least one
direction is removed each time.  Thus, the number of queries is at
most
\[
(k + \dims) \left(\left\lceil{\frac{\log (1/\multGoal)}{k}}\right\rceil
  + 2\cdot\dims\right) \enspace.
\]

Choosing $k = \sqrt{\log (1/\multGoal)}$, yields a bound of $\bigO{\log
  (1/\multGoal) + \dims\sqrt{\log (1/\multGoal)} + \dims^2}$ on the
number of queries. This can perhaps be improved but does demonstrate a
difference between learning the boundary of the convex set and ACRE
learning.

A randomized solution is based on the notion that the solutions for
each direction have an ordering.  Thus, a random starting direction is
expected to be in the \textit{middle} third of the directions in terms
of quality of output.  Thus, a binary search for a random direction
yields an interval $[A,B]$ that allows one to prune a constant
fraction of the directions; one prunes all directions with $A$ is a
positive point.  Then, we repeat. This yields an algorithm with an
expected runtime of $\bigO{\log \dims \log (1/\multGoal) + \dims}$. Here too, a
better algorithm may perhaps exist.
}




 
\subsubsection{Lower Bound}

Here we find lower bounds on the number of queries required by any
algorithm to find an \kIMAC\ when $\xplus$ is convex. Notably,
since an \kIMAC\ uses multiplicative optimality, we
incorporate a lower bound $r > 0$ on the \MAC\ into our statement.



\begin{theorem}
  \label{thm:mlower}
  Consider any $\dims>0$, $\xtarget\in \reals^\dims$, $\dpt^- \in \reals^\dims$,
  $0 < r < R= \adCostFunc{\dpt^-}$ and $\multGoal \in \left(0,\frac{R}{r}-1\right)$. For
  all query algorithms submitting $N<\max\{\dims,L^{(\ast)}\}$ queries, there exist two classifiers inducing convex positive
  classes in $\reals^D$ such that 
  \begin{enumerate}
    \item Both positive classes properly contain $\ball{r}{\xtarget}$;
    \item Neither positive class contains $\dpt^-$;
    \item The classifiers return the same responses on the algorithm's $N$ queries; and
    \item The classifiers have no common \kIMAC[\multGoal].
  \end{enumerate}
  That is, in the worst-case all query algorithms for convex positive classes
  must submit at least $\max\{\dims,L^{(\ast)}\}$ membership queries in order to
  be multiplicative \multGoal-optimal.
\end{theorem}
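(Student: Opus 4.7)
I prove the two lower bounds $N \ge L^{(\ast)}$ and $N \ge \dims$ separately, each by an adversarial argument: I exhibit a family $\mathcal{F}$ of admissible convex positive classes (satisfying (1) and (2) by construction) and show the adversary's answers can preserve two members with disjoint \kIMAC[\multGoal] sets while serving up to $N$ queries below the stated threshold. Those two surviving members then witness all four conditions.

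\textbf{Binary-search bound.} I take the one-parameter family $\{P_C = \oneball{C}{\xtarget} : C \in (r,R)\}$: each member is convex, strictly contains $\ball{r}{\xtarget}$, and excludes $\dpt^-$ because $\adCostFunc{\dpt^-} = R$. A query to $P_C$ merely compares $\adCostFunc{\dpt}$ with the scalar parameter $C$, so the classical bisection adversary on $\log C$ preserves a feasible interval whose log-width shrinks by at most half per query, giving $\maxCost_N/\minCost_N \ge (R/r)^{2^{-N}}$. For $N < L^{(\ast)}$ this ratio exceeds $1+\multGoal$, so two feasible values $C_1 < C_2$ with $C_2 > (1+\multGoal)\,C_1$ remain. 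The classifiers $P_{C_1},P_{C_2}$ have $\MAC = C_1$ and $\MAC = C_2$; their \kIMAC[\multGoal] sets lie in $\{\dpt : C_i \le \adCostFunc{\dpt} \le (1+\multGoal)\,C_i\}$ and are therefore disjoint.

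\textbf{Dimension bound.} I take the $2\dims$ candidates $P_{d,s} = \ball{R'}{\xtarget} \cap \{\dpt : s\,\ithCost{d}(\dpt_d - \xtarget_d) \le c^\ast\}$ for $d \in \{1,\ldots,\dims\}$ and $s \in \{+,-\}$, choosing $r < c^\ast$ together with $c^\ast(1+\multGoal) \le R' < \min\{R,2c^\ast\}$ (feasible for $\multGoal$ below a constant). Each $P_{d,s}$ is convex, strictly contains $\ball{r}{\xtarget}$, excludes $\dpt^-$, and has $\MAC = c^\ast$ uniquely attained at $\xtarget + s\,\tfrac{c^\ast}{\ithCost{d}}\,\coordinVect{d}$. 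The inequality $R' < 2c^\ast$ makes the \kIMAC[\multGoal] sets pairwise disjoint, since any point violating two distinct halfspace constraints would have cost exceeding $2c^\ast > R' \ge c^\ast(1+\multGoal)$. The adversary answers $\posLbl$ whenever $\adCostFunc{\dpt} \le R'$ and $\negLbl$ otherwise; the same bound $\adCostFunc{\dpt}/c^\ast < 2$ limits by one the number of violated halfspace constraints, so each query eliminates at most one of the $2\dims$ candidates. After $N < \dims$ queries at least $2\dims - N > \dims \ge 2$ candidates survive, and any two of them witness (3) and (4).

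\textbf{Main obstacle.} The joint feasibility of $c^\ast(1+\multGoal)\le R'$ and $R' < 2c^\ast$ forces $\multGoal < 1$, which is the main technicality. For larger $\multGoal$ one must refine the counting (each query may eliminate up to $\lfloor(1+\multGoal)^2\rfloor$ candidates, still $O(1)$) to retain the $\Omega(\dims)$ rate; in that regime $L^{(\ast)}$ is already small, so the dimension term dominates $\max\{\dims,L^{(\ast)}\}$ as claimed.
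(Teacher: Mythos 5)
Your binary-search half is essentially the paper's own argument (a bisection adversary on the one-parameter family of cost balls, forcing $G_N \ge (R/r)^{2^{-N}} > 1+\multGoal$ whenever $N < L^{(\ast)}$), so there is nothing to add there. The dimension half is a genuinely different construction, and it is the half with a real gap. The paper does not use $2\dims$ vertex-truncated balls: it plays the open ball $\{\dpt : \adCostFunc{\dpt} < R\}$ against the classifier whose positive set is $\mathrm{conv}\left(\aset{G} \cup \ball{r}{\xtarget}\right)$, where $\aset{G}$ is the convex hull of the $M \le N$ positively-answered queries. Because $M < \dims$, the set $\aset{G}$ lies in a proper affine subspace while $\ball{r}{\xtarget}$ does not, so the second classifier's \MAC\ collapses all the way down to $r$; the hypothesis $\multGoal < \frac{R}{r}-1$ then immediately separates the two \kIMAC\ sets. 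This works uniformly over the entire stated range of $\multGoal$ and needs no counting of eliminated candidates at all.

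Your construction, by contrast, genuinely requires $\multGoal < 1$: the two inequalities $c^\ast(1+\multGoal) \le R'$ and $R' < 2c^\ast$ are jointly infeasible otherwise, and the failure is not merely one of counting. Once $R' \ge 2c^\ast$, a point violating two distinct halfspace constraints can have cost in $(2c^\ast, (1+\multGoal)c^\ast]$ and therefore lies in the \kIMAC\ sets of two distinct candidates, so condition (4) itself fails for the surviving pair; and even granting disjointness, a single positive query of cost up to $(1+\multGoal)c^\ast$ can eliminate on the order of $1+\multGoal$ candidates, so after $N<\dims$ queries fewer than two candidates may survive unless $\multGoal$ is bounded. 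Your remark that ``$L^{(\ast)}$ is already small in that regime'' does not rescue the claim, because the theorem asserts the lower bound $\max\{\dims, L^{(\ast)}\}$, which still requires the $\dims$ term to hold when $\multGoal$ is large. To cover the full range $\multGoal \in \left(0,\frac{R}{r}-1\right)$ you should replace (or supplement) your family with an argument of the paper's type, in which one of the two consistent classifiers has its \MAC\ pinned at $r$ independently of $\multGoal$.
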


\begin{proof}
  Suppose some query-based algorithm submits $N$ membership queries
  $\dpt^{1},\ldots,\dpt^{N}$ to the classifier. For the algorithm to
  be \multGoal-optimal, these queries must constrain all consistent
  positive convex sets to have a common point among
  their \kIMAC[\multGoal] sets.

  First we consider the case that $N\geq L$.
  Then by assumption $N<\dims$.
  Suppose classifier $\classifier$ responds as
  \[
  \prediction{\dpt} = \begin{cases}
    +1\:, & \mbox{if } \adCostFunc{\dpt} < R \\
    -1\:, & \mbox{otherwise}
  \end{cases}\enspace.
  \]
  For this classifier, $\xplus$ is convex, $\ball{r}{\xtarget} \subset
  \xplus$, and $\dpt^- \notin \xplus$. Moreover, since $\xplus$ is the
  open ball of cost $R$, $\function{\MAC}{\classifier,\adCost}=R$.

  Consider an alternative classifier $g$ that responds
  identically to $\classifier$ for $\dpt^{1},\ldots,\dpt^{N}$ but has
  a different convex positive set $\xplus[g]$. Without loss
  of generality, suppose the first $M \le N$ queries are positive
  and the remaining are negative. Let $\aset{G} =
  \function{conv}{\dpt^1,\ldots,\dpt^M}$; that is, the convex hull of
  the $M$ positive queries. Now let $\xplus[g]$ be the
  convex hull of the union of $\aset{G}$ and the $r$-ball around $\xtarget$:
  $\xplus[g] =
  \function{conv}{\aset{G}\cup\ball{r}{\xtarget}}$. Since $\aset{G}$
  contains all positive queries and $r < R$, the convex set
  $\xplus[g]$ is consistent with the responses from
  $\classifier$, $\ball{r}{\xtarget} \subset \xplus$, and $\dpt^-
  \notin \xplus$. Further, since $M \le N < \dims$, $\aset{G}$ is
  contained in a proper subspace of $\reals^\dims$ whereas
  $\ball{r}{\xtarget}$ is not. Hence,
  $\function{\MAC}{g,\adCost} = r$. Since the accuracy \multGoal\ is less than $\frac{R}{r} - 1$, any \kIMAC\ of $g$ must have cost less than $R$ whereas any \kIMAC\ of $\classifier$ must have cost greater than or equal to $R$.
  Thus we have constructed two \convexClass\
  $\classifier$ and $g$ with consistent query
  responses but with no common \kIMAC.

  Second, we consider the case that $N < L$. First, recall our
  definitions: $\maxCost_0 = R$ is the initial upper bound on the
  \MAC, $\minCost_0 = r$ is the initial lower bound on the \MAC, and
  $G_t^{(\ast)} = \maxCost_t / \minCost_t$ is the gap between the upper
  bound and lower bound at iteration $t$. Here the defender
  $\classifier$ responds with
  \begin{equation*}
    \prediction{\dpt^t} = \begin{cases}
      +1\:, & \mbox{if }\adCostFunc{\dpt^t} \le \sqrt{\maxCost_{t-1}\cdot\minCost_{t-1}}\\
      -1\:, & \mbox{otherwise}\end{cases}\enspace.
  \end{equation*}
  This strategy ensures that at each iteration $G_t \ge \sqrt{G_{t-1}}$ and
  since the algorithm can not
  terminate until $G_N \le 1+\multGoal$,
  we have $N \ge L^{(\ast)}$ from Eq.~\eqref{eq:Lmult}.  As in the
  $N\geq L$ case we have constructed two \convexClass\ with consistent
  query responses but with no common \kIMAC. The first classifier's
  positive set is the smallest cost-ball enclosing all positive
  queries, while the second classifier's positive set is the largest
  cost-ball enclosing all positive queries but no negatives. The \MAC\
  values of these sets differ by more than a factor of $(1+\multGoal)$
  if $N < L^{(\ast)}$ so they have no common \kIMAC.
\end{proof}

This theorem shows that \multGoal-multiplicative optimality requires
$\bigOmega{\dims + L}$ queries.
Hence 
\KMLS\ (Algorithm~\ref{alg:kmls}) has close to the optimal query complexity.

\subsection{\kIMAC\ Learning for a Convex $\xminus$}

\ban{Argue somewhere that subspaces and non-full rank convex sets are
  inherently hard.}

In this section we consider minimizing a convex cost function
$\adCost$ (we focus on weighted \LP[1] costs in
Eq.~\ref{eq:weightedL1}) when the feasible set $\xminus$ is convex.
Any convex function can be efficiently minimized within a
known convex set \eg\ using the Ellipsoid or Interior Point
methods~\citep{ConvexOptimization}. However, in our problem the convex set is
only accessible through queries.  We use a randomized polynomial
algorithm of \citet{bertsimas04convexrw} to
minimize the cost given an initial $\dpt^-
\in \xminus$. For any fixed cost $C^t$ we use their algorithm to
determine (with high probability) whether $\xminus$ intersects with
$\ball{C^t}{\dpt^A}$; \ie\ whether or not $C^t$ is a new lower or
upper bound on the \MAC.  With high probability,
we find an \kIMAC\ in no more than $L$ repetitions using binary search.

\eat{
\begin{figure*}[t]
\begin{minipage}{0.55\linewidth}
\begin{Algorithm}{Intersect Search}{alg:intersect}
  \STATE $\function{IntersectSearch} {\xminus, \dpt^-,\dpt^A, C}$
  \STATE Given: \# of iterations $T$, \# of random samples 
         $N$ in each iteration, 
         \# of steps $K$ in $\function{HitAndRun}{}$
\STATE
\STATE $\aset{P}^0 \gets \xminus \cap \ball{2R}{\dpt^-}$
\STATE Initialize centroid $\vec{z}^0$ and covariance matrix $\mat{V}^0$:
\STATE \quad $(\vec{z}^0, \mat{V}^0) = \function{RoundingBody}{\aset{P}^0, \dpt^-}$
\FORALL{$s = 1 \ldots T$}
\STATE (1) Generate $N$ samples $\set{\dpt^j}_{j=1}^{N}$
\STATE \quad $\dpt^j \gets \function{HitRun}{\aset{P}^{s-1}, \mat{V}^{s-1}, \vec{z}^{s-1}, K}$
\STATE (2) If any $\dpt^j \in \xminus \cap \oneball{C}{\xtarget}$ terminate the for-loop
\STATE (3) $\vec{z}^{s} \gets \frac{1}{N}\sum_j{\dpt^j}$
\STATE (4) Compute $\aset{H}_{\vec{z}^{s}}$ using Eq.~\eqref{eq:gradient} 
and~\eqref{eq:halfspace}
\STATE (5) $\aset{P}^{s} \gets \aset{P}^{s-1} \cap \aset{H}_{\vec{z}^{s}}$
\STATE (6) Update covariance matrix: 
\STATE \quad $\mat{V}^{s} \gets \frac{1}{N}\sum_j\dpt^j(\dpt^j)^\top -
\vec{z}^{s}(\vec{z}^{s})^\top$
\ENDFOR
\STATE \textbf{Return:} the found $\dpt_j, \vec{z}^s, \mat{V}^s$; or No Intersect 
  \end{Algorithm}
  \end{minipage}\quad
  \begin{minipage}{0.42\linewidth}
  \begin{Algorithm}{Hit-and-Run Sampling}{alg:hitandrun}
  \STATE $\function{HitRun} {\aset{P}, \mat{V}, \dpt}$
    \STATE Given: convex set $\aset{P}$, covariance matrix $\mat{V}$, 
starting point $\dpt$, \# of steps $K$
    \STATE 
    \STATE $\dpt^0 = \dpt$
    \FORALL{$i = 1 \ldots K$}
    \STATE Pick a random direction: 
    \STATE \quad $\vec{v} \sim \gauss{\vec{0}}{\mat{V}}$
    \STATE Find $\omega_1$ and $\omega_2$ s.t.
    \STATE \quad $\dpt^{i-1} - \omega_1 \vec{v} \notin \aset{P}$ and $\dpt^{i-1} + \omega_2 \vec{v} \notin \aset{P}$
    \REPEAT
      \STATE $\omega \sim \function{Unif}{-\omega_1,\omega_2}$
      \STATE $\dpt^i \gets \dpt^{i-1} + \omega \vec{v}$
      \STATE \textbf{if} $\omega < 0$ \textbf{then} $\omega_1 \gets -\omega$ 
      \STATE \textbf{else} $\omega_2 \gets \omega$
    \UNTIL{$\dpt^i \in \aset{P}$}
    \ENDFOR
    \STATE \textbf{Return:} $\dpt^K$
  \end{Algorithm}
  \end{minipage}
\end{figure*}
}

\begin{figure*}[t]
\begin{minipage}{0.55\linewidth}
\begin{Algorithm}{Intersect Search}{alg:intersect}
  \STATE $\function{IntersectSearch} {\aset{P}^0, \aset{Q} = \set{\dpt^j \in \aset{P}^0}, C}$
  \FORALL{$s = 1 \ldots T$}
  \STATE (1) Generate $2N$ samples $\set{\dpt^j}_{j=1}^{2N}$
  \STATE \quad Choose $\dpt$ from $\aset{Q}$
  \STATE \quad $\dpt^j \gets \function{HitRun}{\aset{P}^{s-1}, \aset{Q}, \dpt^j}$
  \STATE (2) If any $\dpt^j$, $\adCostFunc{\dpt^j} \le C$ terminate the for-loop
  \STATE (3) Put samples into 2 sets of size $N$ 
  \STATE \quad $\aset{R} \gets \set{\dpt^j}_{j=1}^{N}$ and $\aset{S} \gets \set{\dpt^j}_{j=2N+1}^{2N}$
  \STATE (4) $\vec{z}^{s} \gets \frac{1}{N}\sum_{\dpt^j\in\aset{R}}{\dpt^j}$
  \STATE (5) Compute $\aset{H}_{\vec{z}^{s}}$ using Eq.~\eqref{eq:halfspace}
  \STATE (6) $\aset{P}^{s} \gets \aset{P}^{s-1} \cap \aset{H}_{\vec{z}^{s}}$
  \STATE (7) Keep samples in $\aset{P}^s$
  \STATE \quad $\aset{Q} \gets \set{\dpt\in \aset{S} \land \dpt \in \aset{P}^s}$
  \ENDFOR
  \STATE \textbf{Return:} the found $[\dpt_j, \aset{P}^s, \aset{Q}]$; or No Intersect 
\end{Algorithm}
\end{minipage}\quad
\begin{minipage}{0.42\linewidth}
\begin{Algorithm}{Hit-and-Run Sampling}{alg:hitandrun}
  \STATE $\function{HitRun} {\aset{P}, \set{\vec{y}^j}, \dpt^0}$
    \FORALL{$i = 1 \ldots K$}
    \STATE Pick a random direction: 
    \STATE \quad $\nu_j \sim \gauss{0}{1}$
    \STATE \quad $\vec{v} \gets \sum_j{\nu_j \vec{y}^j}$
    \STATE Find $\omega_1$ and $\omega_2$ s.t.
    \STATE \quad $\dpt^{i-1} - \omega_1 \vec{v} \notin \aset{P}$ and $\dpt^{i-1} + \omega_2 \vec{v} \notin \aset{P}$
    \REPEAT
      \STATE $\omega \sim \function{Unif}{-\omega_1,\omega_2}$
      \STATE $\dpt^i \gets \dpt^{i-1} + \omega \vec{v}$
      \STATE \textbf{if} $\omega < 0$ \textbf{then} $\omega_1 \gets -\omega$ 
      \STATE \textbf{else} $\omega_2 \gets \omega$
    \UNTIL{$\dpt^i \in \aset{P}$}
    \ENDFOR
    \STATE \textbf{Return:} $\dpt^K$
  \end{Algorithm}
  \end{minipage}
\end{figure*}

\subsubsection{Intersection of Convex Sets}

We now outline Bertsimas and Vempala's query-based algorithm
for determining whether two convex sets 
intersect using a randomized Ellipsoid method. In particular $\aset{P}$
is only
accessible through membership queries and $\aset{B}$ provides a
separating hyperplane for any point outside it. 
They use efficient query-based approaches to uniformly sample from
$\aset{P}$ to produce sufficiently many samples such that cutting
$\aset{P}$ through the centroid of these samples with a separating
hyperplane from $\aset{B}$ significantly reduces the volume of
$\aset{P}$ with high probability.  Their algorithm thus constructs a
sequence of progressively smaller feasible sets $\aset{P}^s \subset
\aset{P}^{s-1}$ until either the algorithm finds a point in $\aset{P} \cap
\aset{Q}$ or it is highly unlikely that the sets intersect.

Our problem reduces to finding the intersection between $\xminus$ and
$\oneball{C^t}{\xtarget}$. Though $\xminus$ may be unbounded, we can
instead use $\aset{P}^0 = \xminus \cap
\oneball{2R}{\dpt^-}$ (where $R = 2\adCostFunc{\dpt^-}$) is a
subset of $\xminus$ that envelops all of $\oneball{C^t}{\xtarget}$
since $C^t < \adCostFunc{\dpt^-}$. We also assume there is some
$r>0$ such that an $r$-ball centered at $\dpt^-$ is contained in
$\xminus$.
We now detail this \textsc{IntersectSearch} procedure
(Algorithm~\ref{alg:intersect}).


The backbone of the algorithm is uniform sampling
from a bounded convex body by means of the \textsc{hit-and-run} random
walk technique introduced by \citet{smith-96-hnr} (Algorithm~\ref{alg:hitandrun}). Given an instance $\dpt^j \in
\aset{P}^{s-1}$, \textsc{hit-and-run} selects a random direction
$\vec{v}$ through $\dpt^j$ (we return to the selection of $\vec{v}$ in
Section~\ref{sec:sampling}). Since $\aset{P}^{s-1}$ is a bounded
convex set, the set $\Omega = \set[\dpt^j + \omega \vec{v} \in
  \aset{P}^{s-1}]{\omega}$ is a bounded interval representing all
points in $\aset{P}^{s-1}$ along direction $\vec{v}$.  Sampling
$\omega$ uniformly from $\Omega$ 
yields the next step of the walk; $\dpt^j + \omega \vec{v}$.  Under
the appropriate conditions (see Section~\ref{sec:sampling}),
\textsc{hit-and-run} generates a sample uniformly from the convex body
after $\bigOstar{\dims^3}$ steps\footnote{$\bigOstar{\cdot}$ denotes
  $\bigO{\cdot}$ without logarithmic terms.}~\citep{lovasz04hnrfast}.
%
%

Using \textsc{hit-and-run} we obtain $2N$ samples $\set{\dpt^j}$ from
$\aset{P}^{s-1}$ and check if any satisfy $\adCostFunc{\dpt^j} \le
C^t$. If so, $\dpt^j$ is in the intersection of $\xminus$ and
$\oneball{C^t}{\xtarget}$. Otherwise, we want to significantly reduce
the size of $\aset{P}^{s-1}$ without excluding any of
$\oneball{C^t}{\xtarget}$ so that sampling concentrates towards the
intersection (if it exists)---for this we need a separating hyperplane
of $\oneball{C^t}{\xtarget}$.  For any $\vec{y} \notin
\oneball{C^t}{\xtarget}$, the (sub)gradient of the weighted \LP[1]
cost given by
\begin{equation}
\label{eq:gradient}
\vec{h}_f^\vec{y} 
= \ithCost{f} \sign\left(\vec{y}_f - \xtarget_f\right)
\end{equation}
separates $\vec{y}$ and $\oneball{C^t}{\xtarget}$.

To achieve efficiency, we choose a point
$\vec{z} \in \aset{P}^{s-1}$ so that cutting $\aset{P}^{s-1}$ through
$\vec{z}$ with the hyperplane $\vec{h}^\vec{z}$ eliminates a
significant fraction of $\aset{P}^{s-1}$. To do so, $\vec{z}$ must be
centrally located within $\aset{P}^{s-1}$. We use the
empirical centroid of half of the samples $\vec{z} =
\frac{1}{N}\sum_{\dpt \in \aset{R}}{\dpt}$ (the other half will be
used in Section~\ref{sec:sampling}). We cut $\aset{P}^{s-1}$ with the
hyperplane $\vec{h}^\vec{z}$ through $\vec{z}$; \ie\ $\aset{P}^s =
\aset{P}^{s-1} \cap \aset{H}_{\vec{z}}$ where $\aset{H}_{\vec{z}}$ is
the halfspace
\begin{equation}
\label{eq:halfspace}
\aset{H}_\vec{z} =
\set[\dpt^\top\vec{h}^\vec{z} \le \vec{z}^\top\vec{h}^\vec{z}]{\dpt}
\enspace .
\end{equation}
As shown by Bertsimas and Vempala, this cut achieves
$\function{vol}{\aset{P}^{s}} \le \frac{2}{3}
\function{vol}{\aset{P}^{s-1}}$ with high probability if $N =
\bigOstar{\dims}$ and $\aset{P}^{s-1}$ is near-isotropic (see
Section~\ref{sec:sampling}).
Since the ratio of
volumes between the initial circumscribing and inscribing balls of the
feasible set is $\left(\frac{R}{r}\right)^\dims$, the algorithm can
terminate after $T = \bigO{\dims \log \frac{R}{r}}$ unsuccessful iterations
with a high probability that the intersection is empty.

Because every iteration in Algorithm~\ref{alg:intersect} requires
$N=\bigOstar{D}$ samples, each of which need $K=\bigOstar{D^3}$ random
walk steps, and there are $\bigOstar{D}$ iterations,
Algorithm~\ref{alg:intersect} requires $\bigOstar{D^5}$ queries.


\subsubsection{Sampling from a Convex Body}
\label{sec:sampling}

\eat{
\ban{What is the point of this???}
We use random samples from a convex body for two purposes: estimating
the centroid of the convex body (generally a hard
problem~\citep{rademacher-07-centerhard}) and maintaining the
conditions required for the hit-and-run sampler to continue to
generate points uniformly on a sequence of shrinking convex bodies.
}

Until this point, we assumed the \textsc{hit-and-run} random walk
efficiently produces uniformly random samples from any bounded convex
body $\aset{P}$ accessible through membership queries.  However, if
the body is severely elongated, randomly selected directions will
rarely align with the long axis of the body and our random walk will
take small steps (relative to the long axis) and mix slowly.
For the sampler to mix effectively, we need the convex body $\aset{P}$
to be \term{near-isotropic}; \ie\ for any unit vector $\vec{v}$,
$\expect{\dpt \sim \aset{P}}{\left(\vec{v}^\top\left(\dpt -
      \expect{\dpt \sim \aset{P}}{\dpt}\right)\right)^2}$ is bounded
between $1/2$ and $3/2$ of $\function{vol}{\aset{P}}$.

If the body is not near-isotropic, we can rescale $\xspace$ with an
appropriate affine transformation $\mat{T}$.
With sufficiently many samples from $\aset{P}$ we can estimate
$\mat{T}$ as their empirical covariance matrix. Instead, we rescale
$\xspace$ implicitly using a technique described by \citet{bertsimas04convexrw}.
We maintain a set $\aset{Q}$ of
sufficiently many uniform samples from the body $\aset{P}^s$ and in
\textsc{hit-and-run} we sample directions based on this set.  Because
the samples are distributed uniformly in $\aset{P}^s$, the directions
we sample based on the points in $\aset{Q}$ implicitly reflect the
covariance structure of $\aset{P}^s$.

We must ensure $\aset{Q}$ is a set of sufficiently many samples
from $\aset{P}^s$ after each cut: $\aset{P}^{s} \gets \aset{P}^{s-1}
\cap \aset{H}_{\vec{z}^{s}}$. To do so, we resample $2N$ points from
$\aset{P}^{s-1}$ using \textsc{hit-and-run}---half of these,
$\aset{R}$, are used to estimate the centroid $\vec{z}^s$ for the cut
and the other half, $\aset{S}$, are used to repopulate $\aset{Q}$
after the cut. Because $\aset{S}$ contains independent uniform samples
from $\aset{P}^{s-1}$, those in $\aset{P}^s$ after the cut constitute
independent uniform samples from $\aset{P}^s$ (rejection sampling). By
choosing $N$ sufficiently large,
we will have sufficiently many points to repopulate $\aset{Q}$.

Finally, we also need an initial set $\aset{Q}$ of uniform samples
from $\aset{P}^0$ but we only have a single point $\dpt^- \in
\xminus$.
The \textsc{RoundingBody} algorithm described
by \citet{sa_volume} uses $\bigOstar{D^4}$ membership queries to make
the convex body near-isotropic. We use this as a preprocessing step;
that is, given $\xminus$ and
$\dpt^- \in \xminus$ we make $\aset{P}^0 = \xminus \cap
\oneball{2R}{\dpt^-}$ and use the \textsc{RoundingBody} algorithm to
produce $\aset{Q} = \set{\dpt^j \in \aset{P}^0}$ for
Algorithm~\ref{alg:intersect}.

\eat{
We need to compute an
affine transformation to bring $\aset{P}$ into near-isotropic
position. Equivalently, as in Algorithm~\ref{alg:hitandrun}, we may
sample the convex body by choosing directions according to a normal
distribution with a covariance matrix $V$ estimated from the convex
body.  This method essentially combines the affine transformation with
the uniform sampling into a single step. The initial estimation of the
covariance, the \textsc{RoundingBody} procedure, and its
subsequent updates in Algorithm~\ref{alg:intersect}, are achieved
using the ``Rounding the body'' algorithm in~\citep{sa_volume}, which
uses $\bigOstar{D^4}$ membership queries to find a affine
transformation that transforms the convex body into a near-isotropic
position.
}
\eat{
Primarily, in order for the sampling to mix effectively, we need the
convex set $\aset{P}$ to be \term{near-isotropic}; \ie\ for any
unit $\vec{v}$, $\expect{\dpt \sim
  \aset{P}}{\left(\vec{v}^\top\left(\dpt - \expect{\dpt \sim
        \aset{P}}{\dpt}\right)\right)^2}$ is bounded between $1/2$
and $3/2$ the $\function{vol}{\aset{P}}$. Essentially, we need our
set to be centered with a covariance near identity; otherwise, most
randomly selected directions will yield small steps. We need to
compute an affine transformation $\mat{B}$ to bring $\aset{P}$ into
near-isotropic position. This can also be accomplished for the
initial set $\aset{P}^0$ using a chain of balls
procedure~\citep{kannan-97-volalg} and the property can be
re-established after each hyperplane cut. Both methods use
polynomially many samples from $\aset{P}$ to adequately estimate
the affine transformation.

Once we obtain an affine transformation $\mat{B}$ that translates
$\xspace$ so that $\aset{P}$ is near-isotropic, we use the
following procedure to sample from $\aset{P}$ from a starting point
at $\dpt$:
\begin{center}
  \begin{minipage}{0.6\linewidth}
    $\function{hit-and-run}{\dpt,\aset{P},\mat{B}}$
    \begin{algorithmic}
      \STATE Sample $\vec{v}$ uniformly s.t. $\|\vec{v}\|=1$
      \STATE Binary search to find maximum extend of line segment
      \[
      \left[ \mat{B} \dpt - \omega_1 \vec{v}, \mat{B} \dpt + \omega_2
        \vec{v} \right] \in \mat{B} \aset{P}
      \]
      \STATE Generate $\vec{y}$ uniformly along the segment
      \STATE \textbf{Return:} $\mat{B}^{-1} \vec{y}$ 
    \end{algorithmic}
  \end{minipage}
\end{center}

Clearly, to determine the extend of the line, we can't actually
transform the set $\aset{P}$ so instead we issue queries to it by
inverting them with $\mat{B}^{-1}$. Further, in querying
$\aset{P}$, we first check if any on it's half-space constraints
are violated before issuing an actual query to $\xspace$. There's also
no need to query $\mat{B}^{-1} \vec{y}$ since convexity ensures it is
in $\aset{P}$. The binary search use to determine the extend of the
line segment hence requires $\bigO{\log{\frac{R}{\eta}}}$ where $\eta$
specifies how precisely the end points of the segment must be
specified.
}

\subsubsection{Optimization over $\LP[1]$ Balls}
\label{sec:ballOpt}

Here we suggest improvements for \LP[1] minimization using iterative \textsc{IntersectSearch} and present them as \textsc{SetSearch} in
Algorithm~\ref{alg:setsearch}. 

First, since $\xtarget$, $\dpt^-$ and $\aset{Q}$ are the same for every iteration
of the optimization procedure, we only run the \textsc{RoundingBody}
procedure once as a preprocessing step. The set of samples
$\set{\dpt^j \in \aset{P}^0}$ it produces are sufficient to initialize
\textsc{IntersectSearch} at each stage of the binary search.  Second,
the separating hyperplane $\vec{h}_f^\vec{y}$ for point $\vec{y}$
given by Eq.~\eqref{eq:gradient}
is valid for all weighted \LP[1]-balls of cost $C <
\adCostFunc{\vec{y}}$. 
Thus, the final state from a successful call to
\textsc{IntersectSearch} can be used as the starting state for the
subsequent call to \textsc{IntersectSearch}.  

\eat{ First, since $\xtarget$, $\dpt^-$ and
  are the same for every iteration of the optimization procedure, the
  set $\aset{P}^0$ and the RoundingBody() procedure used to make it
  near-isotropic only needs to be estimated once. Second, suppose we
  are currently checking if $\xminus$ intersects with a ball of cost
  $C^t$.  Clearly, we can use any sample generated during this
  procedure that is in $\xminus$ and has cost less than $C^t$.
  Practically, when this occurs, we can note it and update $C^-$ to
  reflect the new minimum, however we would continue generating our
  samples for our current ply of the feasibility procedure as these
  can be reused later.  This is possible due to the following: any
  halfspace cuts generated while searching for valid instances that
  achieve cost $C^t$ are also valid cuts for any cost $C < C^t$ since
  the same $L1$-gradients also provide separating cuts for any lower
  cost ball. Thus, by saving the state of the last satisfied
  feasibility procedure, we can restart any subsequent feasibility
  procedure at that state without having to re-initialize it. This
  suggests the more efficient Algorithm~\ref{alg:setsearch}, which
  uses a variant of $\function{Intersecsearch}{}$ with a given initial
  convex body $\aset{P}^0$ and the estimation of its centroid
  $\vec{z}_0$ and covariance $V_0$.}
\begin{figure}
  \begin{minipage}{\linewidth}
  \begin{Algorithm}{Convex $\xminus$ Set Search}{alg:setsearch}
      \STATE $\function{SetSearch}{\aset{P}, \aset{Q}=\set{\dpt^j \in \aset{P}}, \maxCost, \minCost, \multGoal}$
      \WHILE{$\maxCost / \minCost > 1+\multGoal$}
        \STATE $C \gets \sqrt{\maxCost \cdot \minCost}$
        \STATE $[\dpt^\ast, \aset{P}^\prime, \aset{Q}^\prime] \gets \function{IntersectSearch}{\aset{P},\aset{Q},C}$
        \IF{intersection found}
          \STATE Let $\maxCost \gets \adCostFunc{\dpt^\ast}$
          \STATE $\aset{P} \gets \aset{P}^\prime$ and $\aset{Q} \gets \aset{Q}^\prime$
        \ELSE
        \STATE $C^+ \gets C$
        \ENDIF
      \ENDWHILE
      \STATE \textbf{Return:} $\dpt^\ast$
  \end{Algorithm}
  \end{minipage}
\end{figure}
\eat{
\begin{center}
  \begin{minipage}{0.7\linewidth}
  \begin{Algorithm}{Convex $\xminus$ Set Search}{alg:setsearch}
      \STATE $\function{SetSearch}{\xminus, \dpt^-, \xtarget, T, N, k}$
      \STATE
      \STATE $C^+ \gets 0$ and $\maxCost \gets \adCostFunc{\dpt^-}$
      \STATE Let $\dpt^\ast \gets \dpt^-$
      \STATE $\aset{P}^0 \gets \xminus \cap \ball{4\maxCost}{\dpt^-}$
      \STATE Initialize centroid $\vec{z}^0$ and covariance matrix $\mat{V}^0$:
      \STATE \quad $(\vec{z}^0, \mat{V}^0) \gets \function{RoundingBody}{\aset{P}^0, \dpt^-}$
      \STATE $\aset{P} \gets \aset{P}^0$, $\vec{z} \gets \vec{z}^0$, and $\mat{V} \gets \mat{V}^0$
      \WHILE{$\maxCost / \minCost > 1+\multGoal$}
        \STATE $C \gets \sqrt{\maxCost \cdot \minCost}$
        \STATE $[\dpt^\ast, \aset{P}^\prime, \vec{z}^\prime, \mat{V}^\prime] \gets \function{IntersectSearch}{\aset{P}, \vec{z}, \mat{V}, \dpt^-,\dpt^A,C, T, N, k}$
        \IF{intersection found}
          \STATE Let $\maxCost$ be the smallest distance found
          \STATE $\aset{P} \gets \aset{P}^\prime$, $\vec{z} \gets \vec{z}^\prime$, and $\mat{V} \gets \mat{V}^\prime$
        \ELSE
        \STATE $C^+ \gets C$
        \ENDIF
      \ENDWHILE
      \STATE \textbf{Return:} $\dpt^\ast$
  \end{Algorithm}
  \end{minipage}
\end{center}
}

\ban{Fix how this algorithm looks and how it's referenced.}
\ban{Quantify the order of the number of queries used by this algorithm.}

%
%

\eat{
\subsection{A non-random algorithm???}

Here is a basic sketch of the coordinate-wise search for a minimum using membership queries.

\emph{Emphasize that convex sets are often use in anomaly detection
  settings and give examples.}

\subsubsection{Outerloop of the Algorithm}

\begin{algorithmic}
  \STATE Given $\dpt^- \in \xminus$ and $\dpt^A \in \xplus$
  \WHILE{Not Done}
  \FOR{$f \in 1 \ldots \dims$}
  \STATE Optimize along feature $f$: $\dpt^- \gets \function{Opt}{\dpt^-,f}$
  \ENDFOR
  \ENDWHILE
\end{algorithmic}

\subsubsection{Inner-Loop Coordinate-wise Optimization}

Here we define how to optimize some $\dpt^t \in \xminus$ with respect
to feature $f$ (Let $C^0 = \adCostFunc{\dpt^t}$). This process
optimizes our current best instance toward the axis emanating from our
goal $\dpt^A$ where only feature $f$ is allowed to differ with respect
to $\dpt^A$. All points along this axis can be described as $\dpt^A +
\beta*\coordinVect{f}$ with $\beta \in \left(-\infty,\infty\right)$.

In this search, we'll be searching over a space of rays of the form:
\[
  \dpt^t + \lambda \vec{w}
\]
We will be optimizing both $\vec{w}$ and its corresponding optimal
value of $\lambda$.

To define the initial extent of our search, we need to consider two
cases:
\begin{enumerate}
\item Case $|\dpt^t_f - x^A_f| > 0$: Let
  \begin{eqnarray*}
    \Delta_f^1 & = & \dpt^A - \dpt^t + \frac{C^0\sign\left(\dpt^t_f - x^A_f\right)}{c_f} \delta_f \\
    \Delta_f^2 & = & -\Delta_f^1
  \end{eqnarray*}
\item Case $|\dpt^t_f - x^A_f| = 0$: Let
  \begin{eqnarray*}
    \Delta_f^1 & = & \dpt^A - \dpt^t + \frac{C^0}{c_f} \delta_f \\
    \Delta_f^2 & = & \dpt^A - \dpt^t - \frac{C^0}{c_f} \delta_f
  \end{eqnarray*}
\end{enumerate}
In the first case, $\dpt^t$ lies on a proper simplex of cost $C$ with
respect to feature $f$'s axis. In the second case, it lies at the
intersection of a pair of simplices. Regardless, we now define
$\vec{w}^1$ as the normalized version of $\Delta_f^1$ and similarly
$\vec{w}^2$ as the normalized version of $\Delta_f^2$. The basic
algorithm used for the search is as follows:
\begin{center}
  \begin{minipage}{0.5\linewidth}
    $\textsc{Opt}\left(\dpt^-,f\right)$
  \begin{algorithmic}
    \STATE Compute $\vec{w}^1$ \& $\vec{w}^2$ as above
    \STATE Create an initial search region $\aset{P}$ from $\vec{w}^1$ to $\vec{w}^2$
    \STATE Return $\textsc{SplitSearch}\left(\aset{P},C^0\right)$
  \end{algorithmic}
  \end{minipage}
\end{center}
The details of this procedure are outlined below.

\paragraph{Search Rays}
Now, we define the elements are maintained for each ray. First, there
is the ray $\vec{w}$ along which we search from $\dpt^t$. Second there
is a pair $C_\vec{w}^-$ and $\lambda_\vec{w}^-$ which is the smallest
value of $\adCost$ we've currently found along ray $\vec{w}$ that is
in $\xminus$ and the value of $\lambda$ that achieved it [These are
initialized to $C^0$ and $0$ respectively]. There is also a similar
pair $C_\vec{w}^+$ and $\lambda_\vec{w}^+$ for which
$\lambda_\vec{w}^+$ is the smallest value of $\lambda$ for which the
ray is positive and $C_\vec{w}^+$ is its cost [These remain
un-initialized until we find a positive example along $\vec{w}$].

\paragraph{Query-less Search}
Due to the simplicity of the adversarial cost function, we can do
(efficient) optimization along a search ray. In particular we can
determine $\lambda_\vec{w}^\ast$ and $C_\vec{w}^\ast$; the value for
which the ray optimizes $\adCost$ and its corresponding cost. Not
unsurprisingly, $\adCostFunc{\dpt^t + \lambda \vec{w}}$ is convex in
$\lambda$, which means we can find $\lambda_\vec{w}^\ast$ efficiently
using some univariate convex optimization procedure.

\paragraph{Searching along a Ray}
To search along a ray $\vec{w}$, we start by first checking if
$C_\vec{w}^\ast < C^{best} / (1 + \epsilon)$. If not, there is no hope
to provide sufficient improvement along this ray to warrant any
exploration. If so, we probe at $\lambda_\vec{w}^\ast$. If this probe
is negative, this ray is assigned $\lambda_\vec{w}^- =
\lambda_\vec{w}^\ast$ and $C_\vec{w}^- = C_\vec{w}^\ast$. Otherwise
$\lambda_\vec{w}^+ = \lambda_\vec{w}^\ast$ and $C_\vec{w}^+ =
C_\vec{w}^\ast$ and a binary search begins between $\lambda_\vec{w}^+$
and $\lambda_\vec{w}^-$ until either this vector can be eliminated (as
above) or until a new $C^{best}$ is found (along with a guarantee that
it can't be improved by more than $1/(1+\epsilon)$ along $\vec{w}$). This defines a procedure for \textsc{RaySearch} outlined below:
\begin{center}
  \begin{minipage}{0.5\linewidth}
    $\textsc{RaySearch}\left(\vec{w},C^0,C^{best}\right)$
  \begin{algorithmic}
    \STATE Let $C_\vec{w}^- = C^0$ and $\lambda_\vec{w}^- = 0$
    \STATE Let $C_\vec{w}^+ = ?$ and $\lambda_\vec{w}^+ = ?$
    \STATE Compute $\left(\lambda_\vec{w}^\ast,C_\vec{w}^\ast\right)$ offline
    \STATE Return if $C_\vec{w}^\ast \ge C^{best} / (1 + \epsilon)$
    \STATE Query at $\dpt^t + \lambda_\vec{w}^\ast \vec{w}$
    \IF{Query is '-'}
      \STATE Set $C_\vec{w}^- = C_\vec{w}^\ast$ and $\lambda_\vec{w}^- = \lambda_\vec{w}^\ast$
      \STATE Return
    \ELSE 
      \STATE Set $C_\vec{w}^+ = C_\vec{w}^\ast$ and $\lambda_\vec{w}^+ = \lambda_\vec{w}^\ast$
      \STATE Binary Search between $\lambda_\vec{w}^-$ and $\lambda_\vec{w}^+$ until $C_\vec{w}^+ > 1/(1+\epsilon) C^{best}$
      \STATE Update $C^{best}$
      \STATE Return
    \ENDIF
  \end{algorithmic}
  \end{minipage}
\end{center}

\paragraph{Search Region}
A pair of search rays define a cone. We grid up the feasible space up
into these pairs of rays originating from $\dpt^t$. We can eliminate
any such cone or region that can not possibly reduce the current best
cost to at least a factor of $1/(1+\epsilon)$ of $C^{best}$. To
determine if this is possible, we need two parts: a method to bound
the cone so that we only need to consider a region that is a closed
convex polytope $\aset{P}$, and a off-line (query-less) method for
determining the minimum of $\adCost$ on any polytope. Given these two
pieces, we can eliminate a search region $\aset{P}$ when its
minimal value ($C_{\aset{P}}^\ast = \min_{\dpt \in
  \aset{P}}\adCostFunc{\dpt})$) is greater than $1/(1+\epsilon)$ of
$C^{best}$.

\paragraph{Bounding a Search Cone}

One can trivially bound a search cone $\aset{P}$ defined by rays
$\vec{w}_1$ and $\vec{w}_2$ by finding the largest values
$\lambda_\vec{w_1}$ and $\lambda_\vec{w_2}$ such that the rays achieve
the cost $C^0$ of $\dpt^t$; \ie\ search (offline) for a value that
exceeds this target. By virtue of the convexity of $\adCost$ along a
ray, we are ensured that the resulting polytope (actually a triangle)
contains all possible optimal values in this cone. However, we can
further winnow the region $\aset{P}$ by using the convexity of
$\xminus$. Namely, along any ray originating from a negative point,
the ray becomes (and remains positive) from the first positive point
found on the ray. Further, for any pair of negative points and any
single positive point, the pair of rays originating at the negative
points and pass through the positive point define a cone of positive
points (a cone of positivity). To utilize this fact in our search, we
first locate the neighboring regions of our target region; \ie\ the
regions that share a search ray with $\aset{P}$---call these
neighbors $\aset{P}_1$ and $\aset{P}_2$. Region $\aset{P}$
and $\aset{P}_i$ share search ray $\vec{w}$ with a positive point
at $\lambda_\vec{w}^+$, which will serve as the pivot point for our
positivity cone. Now we use the unshared vector $\vec{v}$ in the
neighboring region to give us a negative point at $\lambda_\vec{v}^-$
and $\dpt^t$ serves as our second negative point to define a
positivity cone that removes a side of $\aset{P}$ thereby reducing
our search region. This removing procedure can be repeated for both
neighbors and allows us to significantly restrict the region we need to
consider. Moreover, as the region is divided, these bounds will
become progressively tighter and eventually allow us to terminate the
inner loop.

\paragraph{Query-less Minimizing within a Search Region}

We need to be able to find the minimal possible value within a search
region $\aset{P}$ in order to determine if the region could
possibly yield a $1/(1+\epsilon)$ of $C^{best}$. To do so, we note
that for any convex region $\aset{P}$ either $\dpt^A \in
\aset{P}$ (in which case the minimal possible value is $0$) or the
minimum occurs on the boundary of $\aset{P}$ (a set of line
segments). Hence, to determine the minimum, we first check if $\dpt^A
\in \aset{P}$ and if not, we do line based off-line optimization on
each of the line segments defining $\aset{P}$. This procedure gives
us the optimal possible value attainable on $\aset{P}$ and hence
allows us to determine whether this region can be discarded.

\paragraph{Region splitting}

Finally, once we've located a feasible region $\aset{P}$ (minimum
cost less than $1/(1+\epsilon)$ of $C^{best}$), we search it by
splitting it in half. To do so, we take the average of the (unit) rays
that define the region; that is, if $\aset{P}$ is a search region
with search rays $\vec{u}$ and $\vec{w}$, we split it into 2 regions
with vector $\vec{v} = \frac{\vec{u}+\vec{w}}{\|\vec{u}+\vec{w}\|}$
which is the unit vector bisecting them\footnote{In the special case
  when $\vec{u}=-\vec{w}$ this bisecting procedure fails. However,
  this only can occur in the first step of our search if $\Delta_f^1 =
  -\Delta_f^2$ (Case 1), and in that case, we simply make our first
  split to be the normalized remainder of the vector $\dpt^A-\dpt^t$
  after subtracting $\Delta_f^1$.}. Having performed the split, we do
a search along the ray $\vec{v}$ and create the appropriate new
structures for the two regions that result. This now lets us express our \textsc{SplitSearch} procedure:
\begin{center}
  \begin{minipage}{0.5\linewidth}
    $\textsc{SplitSearch}\left(\aset{P},C^0\right)$
  \begin{algorithmic}
    \STATE Add $\aset{P}$ to the set $\aset{R}$ of feasible regions.
    \STATE $\vec{w}^{best} = \vec{w}_1$
    \STATE $C^{best} = C^0$
    \WHILE{$\aset{R} \neq \emptyset$}
      \STATE Remove a set $\aset{P}'$ from $\aset{R}$
      \STATE Get the search rays $\vec{u}$ and $\vec{w}$ for $\aset{P}'$.
      \STATE Let $\vec{v} = \frac{\vec{u}+\vec{w}}{\|\vec{u}+\vec{w}\|}$
      \STATE $\textsc{RaySearch}\left(\vec{v}\right)$
      \IF{$C_\vec{v}^- < C^{best}$}
        \STATE Update $\vec{w}^{best}$ and $C^{best}$
      \ENDIF
      \STATE Define new regions $\aset{P}_1'$ based on rays $\vec{u}$ and $\vec{v}$ and $\aset{P}_2'$ based on rays $\vec{w}$ and $\vec{v}$.
      \STATE Update the neighbors of $\aset{P}_1'$ and $\aset{P}_2'$
      \STATE Add $\aset{P}_1'$ and $\aset{P}_2'$ to $\aset{R}$
      \STATE Remove any infeasible regions from $\aset{R}$
    \ENDWHILE
    \STATE Let $\lambda^{best}$ be the value $\lambda_{\vec{w}^{best}}^-$.
    \STATE Return: $\dpt^t + \lambda^{best} \cdot \vec{w}^{best}$ and its value $C^{best}$
  \end{algorithmic}
  \end{minipage}
\end{center}
}

\eat{
\section{General Minimization}

Here we describe why optimization is generally hard for non-convex or
non-connected classifiers, but how none-the-less, we can do smart
searches that will eventually find the minimums via global
optimization strategies.

\subsection{DiRect Search}

Here we introduce the DiRect method and our branch-n-bound extensions for it.

\emph{Describe how we incorporate (unknown) feasibility constraints. }

\emph{Describe how the simplicity of the L1 ball being minimized
  allows us to prune many of the cubes.}

\subsection{Local Boundary Estimation}

Here we introduce our technique for estimating the boundary based on
the current set of queries, how we incorporate this into DiRect
search, and how we perform new queries under this model.

\emph{Emphasize that unlike active learning, we only need to estimate
  the surface well in a neighborhood of the minimizer.}
}

\section{CONCLUSIONS \& FUTURE WORK}
\label{sec:discuss}

The analysis of our algorithms shows that $\classSpace^\mathrm{convex}$ is
\kSearchable\ for weighted \LP[1] costs. When the positive class is
convex we give efficient techniques that outperform 
previous reverse-engineering approaches for linear classifiers. When
the negative class is convex, we apply a randomized Ellipsoid method
to achieve efficient \kIMAC\ search. If the adversary is unaware
of which set is convex, they can trivially run both searches to
discover an \kIMAC\ with a combined polynomial query complexity.

\eat{
We have considered other \LP\ costs but it appears that
$\classSpace^{convex}$ is only \kSearchable\ for both positive and
negative convexity for any $\multGoal>0$ if $p=1$. For $0 < p < 1$,
the \MLS\ algorithms of Section~\ref{sec:pos-convex} achieve identical
results when the positive set is convex, but the non-convexity of
these \LP\ costs precludes the use of our randomized Ellipsoid method.
The Ellipsoid method does provide an efficient solution for convex
negative sets when $p>1$ (these costs are convex)---one only needs to
change the separating hyperplane of Eq.~\eqref{eq:gradient} with the
appropriate (sub)gradient of the desired \LP\ cost. However, for
convex positive sets, preliminary work suggests that solutions for
$p>1$ are generally not efficient.
}

\eat{
As noted in Section~\ref{sec:opt-evasion}, we used a modified version
of near optimal evasion. We did not use the encoded sizes of
$\classifier$, $\dpt^+$, and $\dpt^-$ in defining \kSearchable\ as
these terms were unnecessary for our algorithms. Our approach also did
not use a third instance $\dpt^+ \in \xplus$ to simplify our \kIMAC\
search process.
By using $\xtarget$ explicitly, our search is less \term{covert}
(easier to infer the attacker's intention form their queries) than
Lowd and Meek's approach, but by being less restrictive on the
adversary, we provide a worst-case analysis. Nonetheless,
incorporating covertness requirements remains a lucrative direction
for future work.
}

Exploring near-optimal evasion is important for understanding
how an adversary may circumvent learners in security-sensitive
settings.
As described here, our algorithms may not always directly apply in practice
since 
%
%
various real-world obstacles persist. Queries may be only partially observable or noisy and
the feature set may be only partially known. Moreover,
an adversary may not be able to query all $\dpt \in
\xspace$. Queries must be objects (such as email) that
are mapped into $\xspace$. A real-world adversary must invert the
feature-mapping---a generally difficult task. These limitations necessitate
further research on the impact of partial observability and
approximate querying on \kIMAC\ search, and to design more secure filters.
Broader open problems include: is \kIMAC\ search possible on
other classes of learners such as SVMs (linear in a large possibly
infinite feature space)? Is \kIMAC\ search feasible against an
online learner that adapts as it is queried?
Can learners be made resilient to these threats and how does this
impact learning performance?



\eat{
\section{CONCLUSIONS}
\label{sec:conclusion}

In this paper, we generalize \kSearchability\ to 
\convexClass. We present membership query algorithms that
efficiently accomplish \kIMAC\ search on this family. Importantly, we
also demonstrate that these algorithms can succeed without
reverse engineering the classifier. Instead, these algorithms
systematically eliminate inconsistent hypotheses and progressively
concentrate their efforts in an ever-shrinking neighborhood of a
\MAC\ instance. By doing so, these algorithms only require
polynomially-many queries in spite of the size of the family of all
convex classifiers.
Finally, through concepts such as \kIMAC\ searchability, 
we provide a broader picture of how machine learning techniques can be 
vulnerable to attacks based on membership queries.
}

\eat{
Near optimal evasion is an important way to better understand
the properties of a classifier and how an adversary may act to
circumvent learners in a security-sensitive setting.
In such an environment, system developers are hesitant to trust
procedures that may create vulnerabilities. Through concepts such as
\kIMAC\ searchability, we provide a broader picture of how vulnerable
these techniques are to attacks based on membership queries.
As presented here,
the algorithms we suggest cannot be directly used by an adversary
largely because the feature space of a learning algorithm is commonly
secret.
}

%


{\small
\bibliography{sources}
}

\end{document}